\newtheorem{theorem}{Theorem}
\newenvironment{proof}{{\noindent\it Proof}\quad}{\hfill $\square$\par}
\def\g{{\bf g}}
\def\u{{\bf u}}
\def\v{{\bf v}}
\def\w{{\bf w}}
\def\x{{\bf x}}
\def\z{{\bf z}}
\def\D{{\bf D}}
\def\E{{\bf E}}
\def\0{{\bf 0}}
\def\1{{\bf 1}}
\def\2{{\bf 2}}
\def\3{{\bf 3}}
\def\4{{\bf 4}}
\def\5{{\bf 5}}
\def\6{{\bf 6}}
\def\7{{\bf 7}}
\def\8{{\bf 8}}
\def\9{{\bf 9}}
\def\RB{{\mathbb R}}
\begin{document}

\twocolumn[
\icmltitle{Feature-Distributed SVRG for High-Dimensional Linear Classification}

% It is OKAY to include author information, even for blind
% submissions: the style file will automatically remove it for you
% unless you've provided the [accepted] option to the icml2018
% package.

% List of affiliations: The first argument should be a (short)
% identifier you will use later to specify author affiliations
% Academic affiliations should list Department, University, City, Region, Country
% Industry affiliations should list Company, City, Region, Country

% You can specify symbols, otherwise they are numbered in order.
% Ideally, you should not use this facility. Affiliations will be numbered
% in order of appearance and this is the preferred way.
\icmlsetsymbol{equal}{*}

\begin{icmlauthorlist}
\icmlauthor{Gong-Duo Zhang}{nju}
\icmlauthor{Shen-Yi Zhao}{nju}
\icmlauthor{Hao Gao}{nju}
\icmlauthor{Wu-Jun Li}{nju}
\end{icmlauthorlist}

\icmlaffiliation{nju}{Nanjing University, Nanjing, China}

\icmlcorrespondingauthor{Gong-Duo Zhang}{zhanggd@lamda.nju.edu.cn}
\icmlcorrespondingauthor{Shen-Yi Zhao}{zhaosy@lamda.nju.edu.cn}
\icmlcorrespondingauthor{Hao Gao}{gaoh@lamda.nju.edu.cn}
\icmlcorrespondingauthor{Wu-Jun Li}{liwujun@nju.edu.cn}

% You may provide any keywords that you
% find helpful for describing your paper; these are used to populate
% the "keywords" metadata in the PDF but will not be shown in the document
\icmlkeywords{Machine Learning, ICML}

\vskip 0.3in
]

% this must go after the closing bracket ] following \twocolumn[ ...

% This command actually creates the footnote in the first column
% listing the affiliations and the copyright notice.
% The command takes one argument, which is text to display at the start of the footnote.
% The \icmlEqualContribution command is standard text for equal contribution.
% Remove it (just {}) if you do not need this facility.

\printAffiliationsAndNotice{}  % leave blank if no need to mention equal contribution
%\printAffiliationsAndNotice{\icmlEqualContribution} % otherwise use the standard text.

\begin{abstract}

Linear classification has been widely used in many high-dimensional applications like text classification. To perform linear classification for large-scale tasks, we often need to design distributed learning methods on a cluster of multiple machines. In this paper, we propose a new distributed learning method, called feature-distributed stochastic variance reduced gradient~(\mbox{FD-SVRG}) for high-dimensional linear classification. Unlike most existing distributed learning methods which are instance-distributed, \mbox{FD-SVRG} is feature-distributed. \mbox{FD-SVRG} has lower communication cost than other instance-distributed methods when the data dimensionality is larger than the number of data instances. Experimental results on real data demonstrate that \mbox{FD-SVRG} can outperform other state-of-the-art distributed methods for high-dimensional linear classification in terms of both communication cost and wall-clock time, when the dimensionality is larger than the number of instances in training data.
\end{abstract}

\section{Introduction}\label{sec:intro}
%Linear classification is an important task in machine learning. Stochastic gradient descent~(SGD) is an effective method to solve it. Moreover, many advanced methods have been proposed which shown promising performance. Such as the variants of SGD including stochastic average gradient~(SAG)~\cite{DBLP:journals/mp/SchmidtRB17} and stochastic variance reduced gradient~(SVRG)~\cite{DBLP:conf/nips/Johnson013} which perform optimization on the primal problem. Stochastic dual coordinate ascent~(SDCA)~\cite{DBLP:journals/jmlr/Shalev-Shwartz013}, which performs optimization on the dual formulation. Furthermore, to perform linear classification for large-scale data, distributed learning methods need to be designed on a cluster of multiple machines. PSGD~\cite{DBLP:conf/nips/ZinkevichWSL10} and DSVRG~\cite{lee2015distributed} is representative distributed SGD methods. PASSCoDe~\cite{DBLP:conf/icml/HsiehYD15}, DisDCA~\cite{DBLP:conf/nips/Yang13}, CoCoA~\cite{DBLP:conf/nips/JaggiSTTKHJ14} and CoCoA+~\cite{DBLP:conf/icml/MaSJJRT15} are representative distributed dual coordinate ascent methods.

Linear classification models, such as logistic regression~(\mbox{LR}) and linear support vector machine~(SVM), can achieve good performance in many high-dimensional applications like text classification. When the training set is too large to be handled by one single machine~(node), we often need to design distributed learning methods on a cluster of multiple machines. Hence, it has become an interesting topic to design distributed linear classification models for some large-scale tasks with high-dimensionality.

For large-scale linear classification problems, stochastic gradient descent~(SGD) and its variants like stochastic average gradient~(SAG)~\cite{DBLP:journals/mp/SchmidtRB17}, SAGA~\cite{DBLP:conf/nips/DefazioBL14}, stochastic dual coordinate ascent~(SDCA)~\cite{DBLP:journals/jmlr/Shalev-Shwartz013,DBLP:conf/icml/Shalev-Shwartz014} and stochastic variance reduced gradient~(SVRG)~\cite{DBLP:conf/nips/Johnson013} have shown promising performance in real applications. Hence, most existing distributed learning methods adopt SGD or its variants for updating~(learning) the parameters of the linear classification models. Representatives include PSGD~\cite{DBLP:conf/nips/ZinkevichWSL10}, DC-ASGD~\cite{DBLP:conf/icml/ZhengMWCYML17}, DisDCA~\cite{DBLP:conf/nips/Yang13}, CoCoA~\cite{DBLP:conf/nips/JaggiSTTKHJ14}, CoCoA+~\cite{DBLP:conf/icml/MaSJJRT15} and DSVRG~\cite{lee2015distributed}.

According to the organization framework of the cluster, existing distributed learning methods can be divided into three main categories. The first category is based on the master-slave framework, which has one master node~(machine) and some slave nodes. In general, the model parameter is stored in the master node, and the data is distributively stored in the slave nodes. The master node is responsible for updating the model parameter, and the slave nodes are responsible for computing the gradient or stochastic gradient. This category is a centralized framework. One representative of this category is MLlib on Spark \cite{DBLP:journals/jmlr/MengBYSVLFTAOXX16}.  The bottleneck of this kind of centralized framework is the high communication cost on the central~(master) node~\cite{DBLP:conf/nips/LianZZHZL17}. The second category is based on the Parameter Server framework~\cite{DBLP:conf/osdi/LiAPSAJLSS14,DBLP:conf/nips/LiASY14,DBLP:conf/kdd/XingHDKWLZXKY15}, which has two kinds of nodes called Servers and Workers respectively. The Servers are used to store and update the model parameter, and the Workers are used to distributively store the data and compute the gradient or stochastic gradient. PS-Lite\footnote{PS-Lite is called Parameter Server in the original paper~\cite{DBLP:conf/osdi/LiAPSAJLSS14}. In this paper, we use Parameter Server to denote the general framework, and use PS-Lite for the specific Parameter Server platform in~\cite{DBLP:conf/osdi/LiAPSAJLSS14}. PS-Lite can be downloaded from \url{https://github.com/dmlc/ps-lite}.}~\cite{DBLP:conf/osdi/LiAPSAJLSS14} and Petuum \cite{DBLP:conf/kdd/XingHDKWLZXKY15} are two representatives of this category. Parameter Server is also a centralized framework. Unlike the master-slave framework which has only one master node for model parameter, Parameter Server can use multiple Servers to distributively store and update the model parameter, and hence can relief the communication burden on the central nodes~(Servers). However, there also exists frequent communication of gradients and parameters between Servers and Workers, because the parameter and data are stored separately in Servers and Workers in Parameter Server framework. The third category is the decentralized framework, in which there are only workers and no central nodes~(servers). The data is distributively stored on all the workers, and all workers need to store and update~(learn) the model parameter. D-PSGD~\cite{DBLP:conf/nips/LianZZHZL17} and DSVRG~\cite{lee2015distributed} are two representatives of this category. D-PSGD is a distributed SGD method that abandons the central node and need much less communication on the busiest node compared to centralized frameworks. But it still need to communicate parameter vector frequently between workers. DSVRG is a distributed SVRG method which has a ring framework. Because the convergence rate of SVRG is much faster than SGD~\cite{DBLP:conf/nips/Johnson013}, DSVRG also converges much faster than other SGD-based distributed methods. Furthermore, the decentralized framework of DSVRG also avoids the communication bottleneck in the centralized frameworks. Hence, DSVRG has achieved promising performance for learning linear classification models.

Most existing distributed learning methods, including all the centralized and decentralized methods mentioned above, are instance-distributed, which partition the training data by instances. These instance-distributed methods have achieved promising performance in large-scale problems when the number of instances is larger than dimensionality~(the number of features). In some real applications like web mining, astronomical projects, financial and biotechnology applications, the dimensionality can be larger than the number of instances~\cite{negahban2012unified}. For these cases, the communication cost of instance-distributed methods is typically high, because they often need to communicate the high-dimensional parameter vectors or gradient vectors among different machines.
%The data may be difficult to attain because of the privacy and security reasons, such as face recognition applications, gene detection. Or the labeled data need specialist to collect.

In this paper, we propose a new distributed SVRG method, called feature-distributed SVRG~(\mbox{FD-SVRG}), for high-dimensional linear classification. The contributions of \mbox{FD-SVRG} are briefly listed as follows:
\begin{itemize}
\item  Unlike most existing distributed learning methods which are instance-distributed, \mbox{FD-SVRG} is feature-distributed.
\item FD-SVRG has the same convergence rate as the non-distributed~(serial) SVRG, while the parameters in \mbox{FD-SVRG} can be distributively learned.
%\item FD-SVRG is based on a decentralized framework, in which the communication burden are balanced among all workers.
\item FD-SVRG has lower communication cost than other instance-distributed methods when the dimensionality is larger than the number of instances.
\item Experimental results on real data demonstrate that \mbox{FD-SVRG} can outperform other state-of-the-art distributed SVRG methods in terms of both communication cost and wall-clock time, when the dimensionality is larger than the number of instances in training data.
\item In particular, compared with the Parameter Server \mbox{PS-Lite} which has been widely used by both academy and industry, FD-SVRG is several orders of magnitude faster.
\end{itemize}

Please note that our feature-distributed framework is not only applicable to SVRG, it can also be applied to SGD and other variants. Furthermore, it can also be used for regression or other liner models. Due to space limitation, we only focus on SVRG based linear classification here and leave other variants for further study.

\section{Problem Formulation}
Although there exist different formulations for the linear classification problem, this paper focuses the most popular formulation shown as follows:
\begin{align}\label{eq:objectiveF}
  &\min_\textbf{w}  f(\w) = \frac{1}{N} \sum_{i=1}^N f_{i}(\textbf{w}), \\
  &f_{i}(\textbf{w}) = \varphi_{i}(\textbf{w}^T\textbf{x}_i,y_i)+  g(\textbf{w}),
\end{align}
where $N$ is the number of instances in the training set, $\x_i \in \RB^d$ is the feature vector of instance $i$, $y_i \in \{-1,+1\}$ is the class label of instance $i$, $d$ is the dimensionality~(number of features) of the instances, $\w \in \RB^d$ is the parameter to learn, $\varphi_{i}(\textbf{w}^T\textbf{x}_i,y_i)$ is the loss defined on instance $i$, $g(\w)$ is a regularization function. Here, we only focus on two-class problems, but the techniques in this paper can also be adapted for multi-class problems which are omitted for space saving.

Many popular linear classification models can be formulated as the form in~(\ref{eq:objectiveF}). For example, in logistic regression~(LR), $ f_i(\textbf{w}) = \log (1 + e^{-y_i\textbf{w}^T\textbf{x}_i}) + \frac{\lambda}{2} ||\textbf{w}||_2^2$, where $\varphi_{i}(\textbf{w}^T\textbf{x}_i,y_i)$ is the logistic loss $\log (1 + e^{-y_i\textbf{w}^T\textbf{x}_i})$ and $g(\w) =\frac{\lambda}{2} ||\textbf{w}||_2^2$ is the $L_2$-norm regularization function with a hyper-parameter $\lambda$. In linear SVM, $ f_i(\textbf{w}) = \max\{0,1 - y_i \textbf{w}^T \textbf{x}_i\}+\frac{\lambda}{2} ||\textbf{w}||_2^2$.

In many real applications, the training set can be too large to be handled by one single machine~(node). Hence, we need to design distributed learning methods to learn~(optimize) the parameter $\w$ based on a cluster of multiple machines~(nodes). In some applications, $N$ can be larger than $d$. And in other applications, $d$ can be larger than $N$. In this paper, we focus on the case when $d$ is larger than $N$, which has attracted much attention in recent years~\cite{DBLP:journals/focm/ChandrasekaranRPW12,negahban2012unified,DBLP:conf/icml/WangDL16,DBLP:conf/icml/SivakumarB17}.

\section{Related Work}\label{sec:relatedWork}
As stated in Section~\ref{sec:intro}, there exist three main categories of distributed learning methods for the problem in~(\ref{eq:objectiveF}). Here, we briefly introduce these existing methods to motivate the contribution of this paper. Because the master-slave framework can be seen as a special case of Parameter Server with only one Server, here we only introduce Parameter Server and the decentralized framework.

\subsection{Parameter Server}
The Parameter Server framework~\cite{DBLP:conf/osdi/LiAPSAJLSS14,DBLP:conf/kdd/XingHDKWLZXKY15} is illustrated in Figure~\ref{PSframework}, in which there are $p$ Servers and $q$ Workers. Let $\textbf{D}\in \mathbb{R}^{d\times N}$ denote the training data matrix, where the $i$th column of $\D$ denotes the $i$th instance $\x_i$. In Parameter Server, the whole training set $\D$ is instance-distributed, which means that $\D$ is partitioned vertically (by instance) into $q$ subsets $\{\textbf{D}_1,\textbf{D}_2, \cdots, \textbf{D}_q\}$ and $\D_k$ will be assigned to Worker$\_k$. The parameter $\textbf{w} \in \mathbb{R}^d$ is cut off into $p$ parts $\{\textbf{w}^{(1)}, \textbf{w}^{(2)}, \cdots,\textbf{w}^{(p)}\}$ and $\textbf{w}^{(k)}$ will be assigned to Server$\_k$. Servers are responsible for updating parameters, and  Workers are responsible for computing gradients.
\begin{figure}[t]
  \centering
  \includegraphics[width=0.29\textwidth]{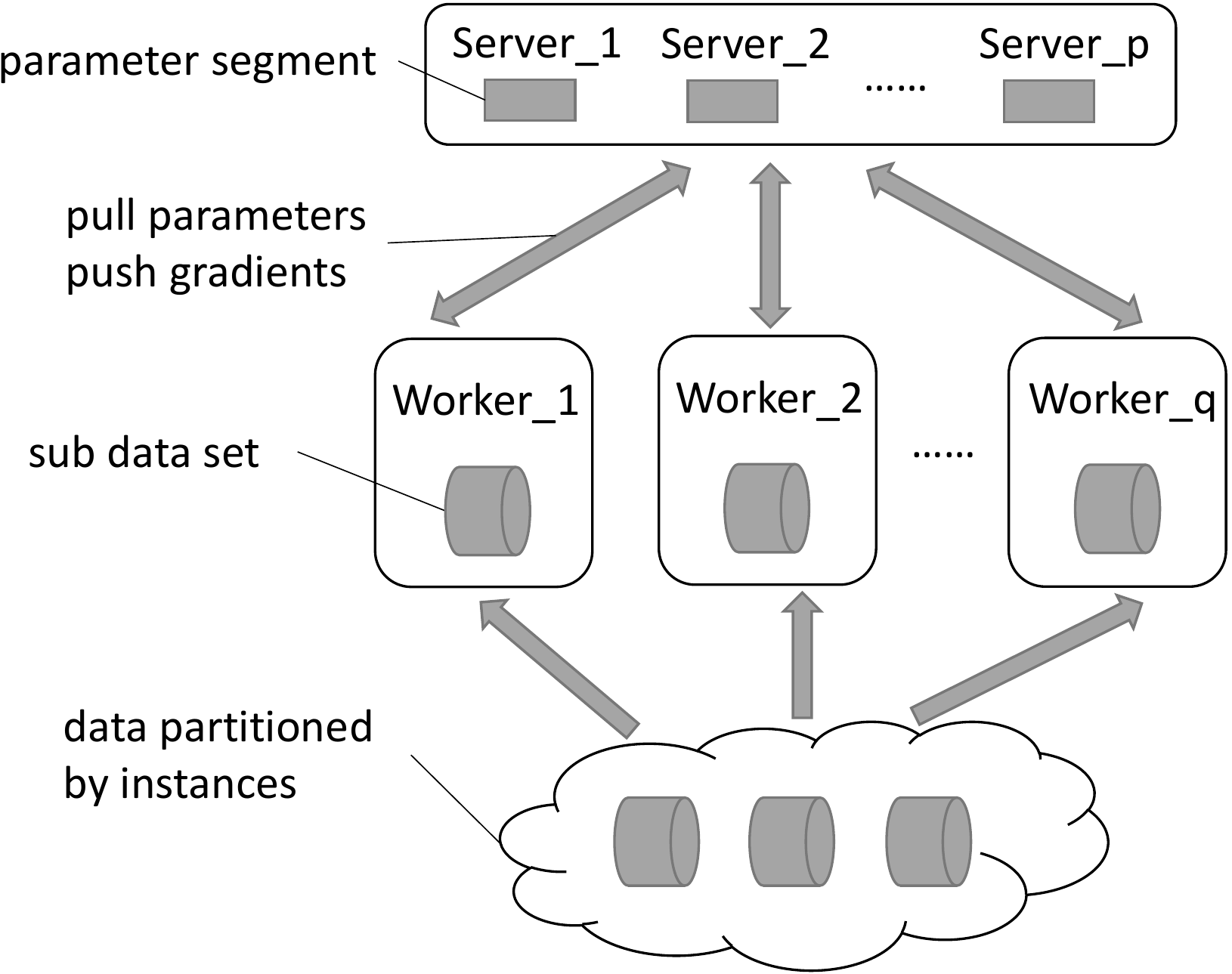}\\
  \caption{ Parameter Server framework.}\label{PSframework}
\end{figure}
The communication is done by \textit{pull} and \textit{push} operations. That is to say, the Workers will pull parameters from Servers, and push gradients to Servers. When the number of Workers becomes larger, there are more machines for gradient computation at the same time and the \textit{pull} and \textit{push} requests will be more frequent for Servers. Although PS-Lite and Petuum can use $\langle key, value\rangle$ data structure to decrease the communication cost for sparse data, the total communication cost is still high because of the frequent communication. Adding more Servers can make the parameter segment $\textbf{w}^{(k)}$ in each Server become smaller but cannot reduce the number of communication requests for Servers. Furthermore, as the number of Servers increases, each Worker need to cut the parameter into more segments and communicate more frequently. For the SVRG-based methods, such as mini-batch based synchronous distributed SVRG~(SynSVRG) and asynchronous SVRG~(AsySVRG)~\cite{DBLP:conf/nips/ReddiHSPS15,DBLP:conf/aaai/ZhaoL16}\footnote{Many existing AsySVRG methods, such as those in~\cite{DBLP:conf/nips/ReddiHSPS15,DBLP:conf/aaai/ZhaoL16}, are initially proposed for multi-thread system with a shared memory. But these methods can be easily extended for a cluster of multiple machines to get a distributed version. The distributed SVRG, including SynSVRG and AsySVRG, implemented with parameter server can be found in Appendix B of the supplementary material.}, we need to communicate a dense full gradient vector in each epoch and hence the communication is also high.

%\begin{figure}
%  \centering
%  \includegraphics[width=0.22\textwidth]{figure/queue.pdf}\\
%  \caption{ Message queue on Server.}\label{fig:queue}
%\end{figure}
%
%\begin{figure}
%  \centering
%  \includegraphics[width=0.31\textwidth]{figure/distribution.pdf}\\
%  \caption{ Distribution of delay.}\label{fig:delayDis}
%\end{figure}
%
%To decrease the synchronization and communication cost, most Parameter Server-based methods adopt asynchronous or stale synchronous protocol (SSP)~\cite{DBLP:conf/kdd/XingHDKWLZXKY15}. There also exists a problem for the asynchronous or stale synchronous algorithms in Parameter Server. A Worker pulls parameter $\textbf{w}$ from Servers, then computes gradient. During the time before a Worker sends its gradient to Servers, Servers may have updated the parameter several times using gradients from other Workers. That means there is a delay for the Worker. That is to say, it computes gradient using an old/stale parameter. This phenomenon is shown in Figure~\ref{fig:queue}. When Servers update one Worker's gradient, there may be $\tau$ gradients from other Workers which have been received. We use 8 Workers to count the distribution of the delay $\tau$. The result is shown in Figure~\ref{fig:delayDis}. We can find that almost half of the updates have a delay $\tau$ larger than 3. That will slow the convergence rate of the whole learning procedure.

\subsection{Decentralized Framework}
Parameter Server is a centralized framework, where the central nodes~(Servers) will become the busiest ones to handle high communication burden from Workers, especially when the number of Workers is large. Recently, decentralized framework~\cite{DBLP:conf/nips/LianZZHZL17,lee2015distributed} is proposed to avoid the communication traffic jam on the busiest machines. This framework is illustrated in Figure~\ref{DCframework}. There are no Servers in the decentralized methods. Each machine~(Worker) will be assigned a subset of the training instances. Based on the local subset of training instances, each Worker computes gradient and update its local parameter. Then different machines communicate parameter among each other. EXTRA~\cite{DBLP:journals/siamjo/ShiLWY15}, \mbox{D-PSGD}~\cite{DBLP:conf/nips/LianZZHZL17} and DSVRG~\cite{lee2015distributed} are the representatives of this kind of decentralized methods. Although the communication cost of decentralized methods is balanced among Workers, the decentralized methods need to communicate dense parameter vectors, even if the data is sparse. When the data is high-dimensional, the communication cost is also very high.

\begin{figure}[t]
  \centering
  \includegraphics[width=0.3\textwidth]{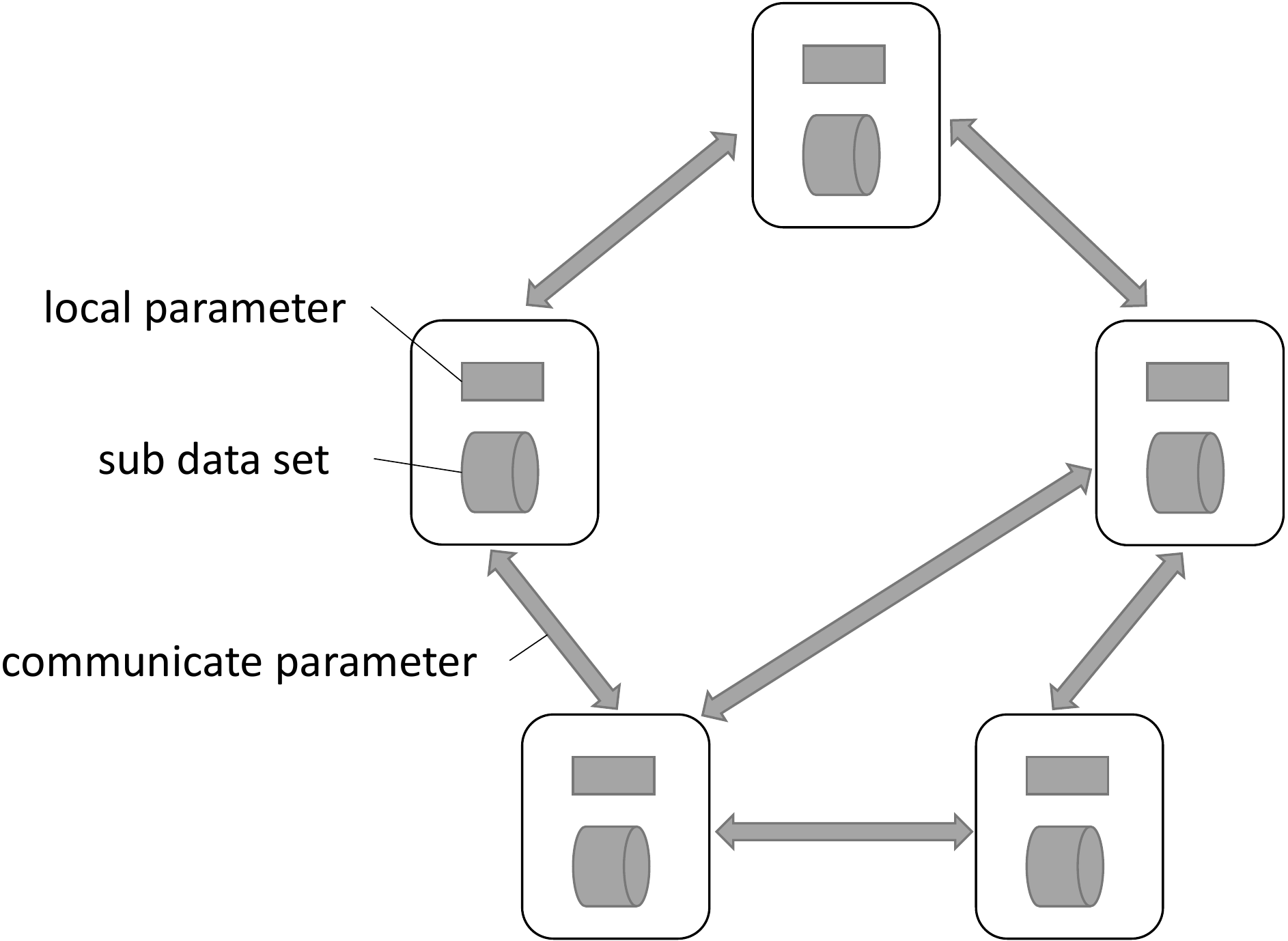}\\
  \caption{Decentralized framework.}\label{DCframework}
\end{figure}

\section{Feature-Distributed SVRG}

Most existing distributed learning methods, including all the centralized and decentralized methods introduced in Section~\ref{sec:relatedWork}, are instance-distributed, which partition the training data by instances. When the dimensionality is larger than the number of instances, i.e., $d>N$, the communication cost of instance-distributed methods is typically high, because they often need to communicate the high-dimensional parameter vectors or gradient vectors of length $d$ among different machines.

In this section, we present our new method called FD-SVRG for high-dimensional linear classification. FD-SVRG is feature distributed, which partitions the training data by features. For training data matrix $\textbf{D}\in \mathbb{R}^{d\times N}$, the difference between instance-distributed partition and feature-distributed partition is illustrated in Figure~\ref{figurepartition}, where the upper-right is feature-distributed partition and the lower-right is instance-distributed partition.

FD-SVRG is based on SVRG, which is much faster than SGD~\cite{DBLP:conf/nips/Johnson013}. For ease of understanding, we briefly present the original non-distributed~(serial) SVRG in Appendix A of the supplementary material.
\begin{figure}[t]
  \centering
  % Requires \usepackage{graphicx}
  %\includegraphics[clip=true,viewport = 0 0 10 10]{1.jpg}\\
  \includegraphics[width=0.21\textwidth]{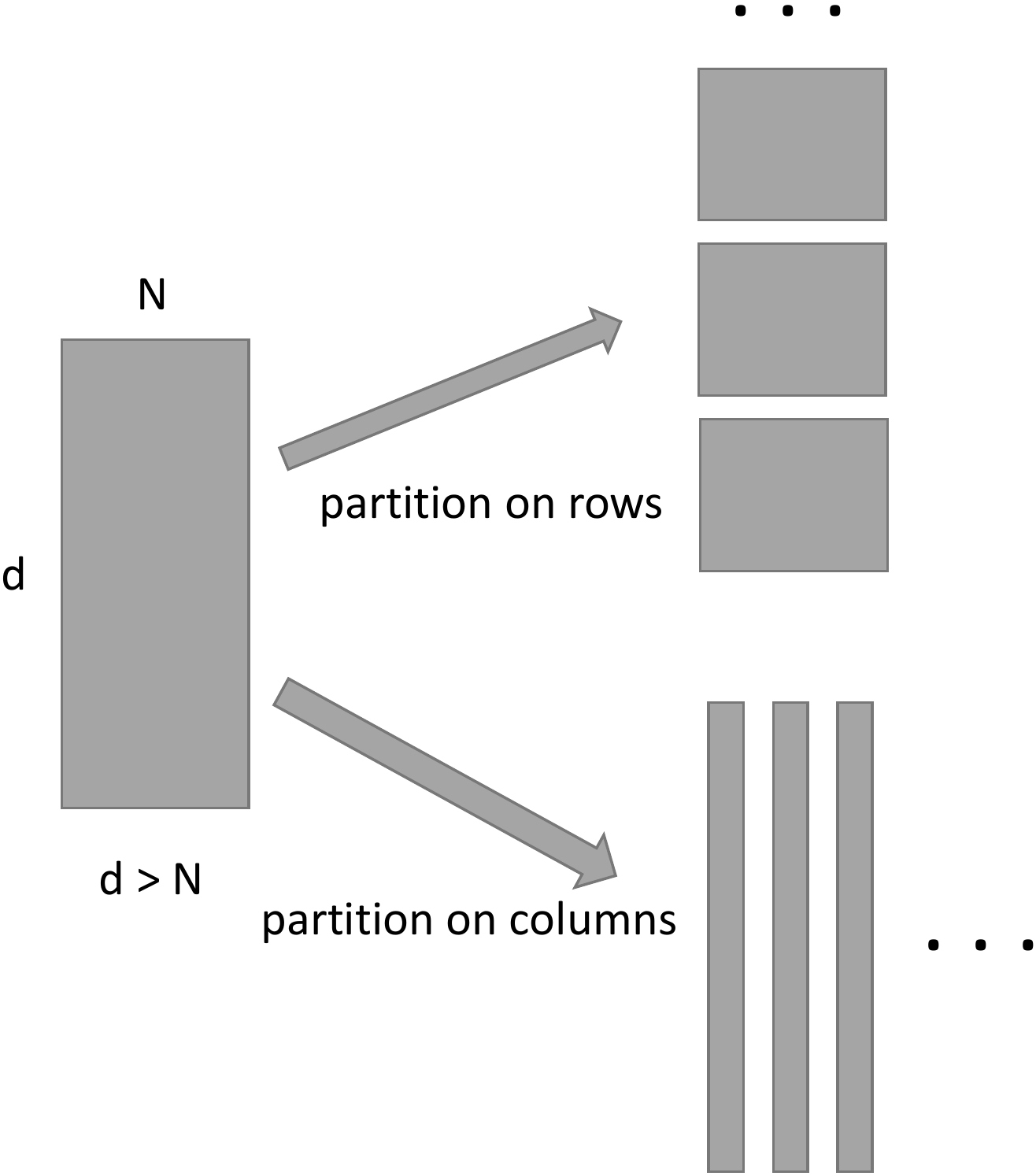}\\
  \caption{The difference between instance-distributed and feature-distributed.}\label{figurepartition}
\end{figure}
\begin{figure}[t]
  \centering
  \includegraphics[width=0.31\textwidth]{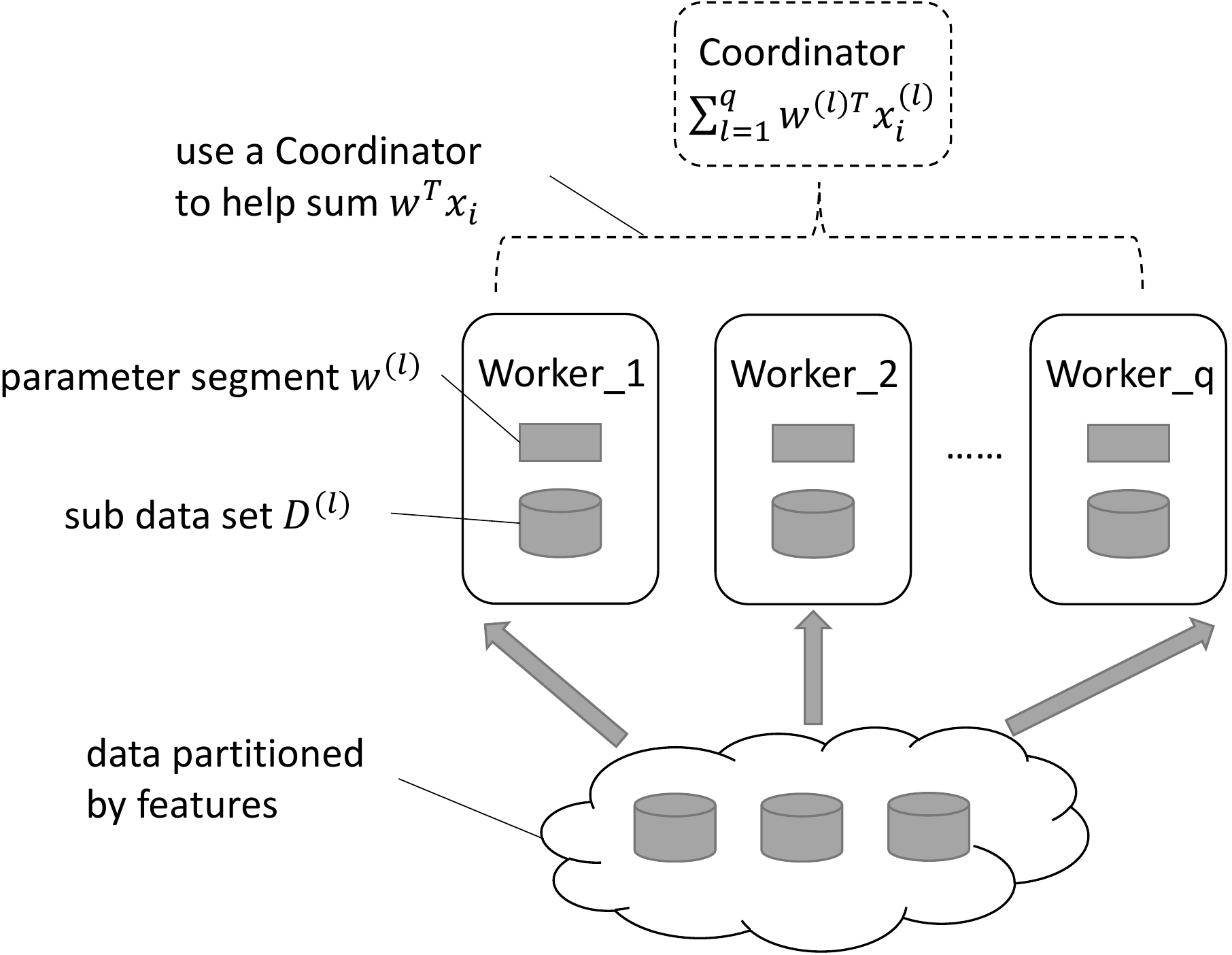}\\
  \caption{Framework of FD-SVRG.}\label{figureLCF}
\end{figure}

\subsection{Framework}

Figure~\ref{figureLCF} shows the distributed framework of FD-SVRG, in which there are $q$ workers and one coordinator. FD-SVRG is feature distributed. More specifically, the training data matrix $\D \in \RB^{d\times N}$ is partitioned horizontally~(by features) into $q$ parts $\D = (\D^{(1)}, \D^{(2)}, $ $ \ldots, \D^{(q)})$, and $\D^{(l)} \in \RB^{d_l\times N}$ is stored on Worker\_$l$. Here, $\sum_{l=1}^q d_l = d$. The parameter $\w$ is also partitioned into $q$ parts $\w=(\w^{(1)},\w^{(2)},\ldots,\w^{(q)})$, with $\w^{(l)}\in \RB^{d_l}$. The features of $\w^{(l)}$ correspond to the features of $\D^{(l)}$. $\w^{(l)}$ is also stored on Worker\_$l$.

\subsection{Learning Algorithm}

First, we rewrite $f_i(\w)$ in~(\ref{eq:objectiveF}) as follows:
\begin{equation}
f_{i}(\textbf{w}) = \varphi_{i}(\textbf{w}^T\textbf{x}_i,y_i)+  \sum_{l=1}^q  g_l(\textbf{w}^{(l)}),
\end{equation}
where $g_l(\textbf{w}^{(l)})$ is the regularization function defined on $\w^{(l)}$. It is easy to find that if $g(\cdot)$ is $L_2$ or $L_1$ norm, $g_l(\cdot)$ is also $L_2$ or $L_1$ norm.

Then we can get the gradient:
\begin{align}\label{eq:gradient}
  \nabla f_i(\w) = \nabla\varphi_i(\w^T\x_i,y_i)\x_i + \sum_{l=1}^q \nabla g_l(\textbf{w}^{(l)}).
\end{align}
We can find that the main computation of the gradient $\nabla f_i(\w)$ is to calculate the inner product $\w^T\x_i$ and $\nabla g_l(\textbf{w}^{(l)})$. Since
\begin{align*}
  \w^T\x_i = \sum_{l=1}^{q}\w^{(l)T}\x_i^{(l)},
\end{align*}
we can distributively complete the computation.

\begin{figure}
  \centering
  \includegraphics[width=0.21\textwidth]{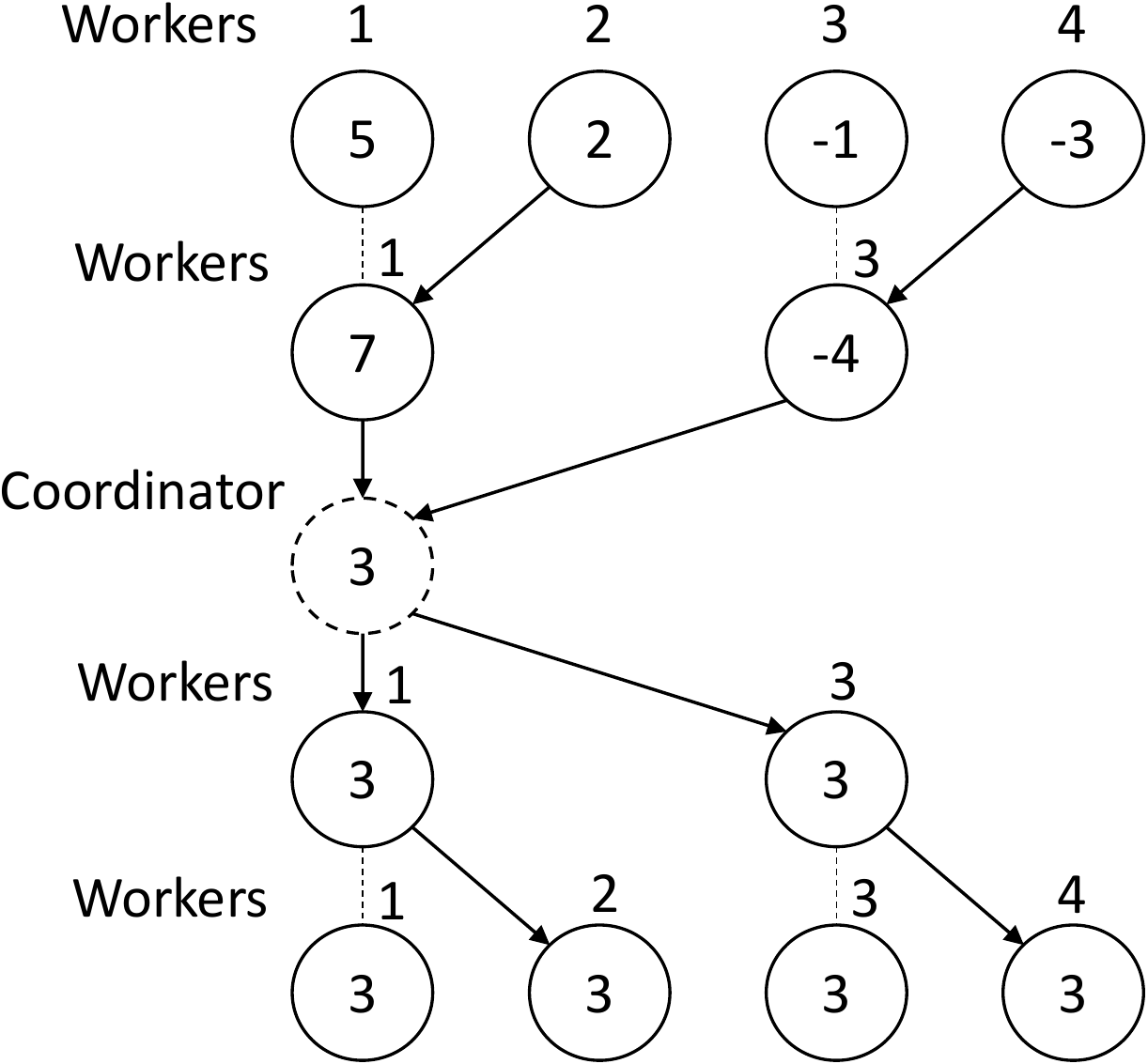}\\
  \caption{An example of tree-structured global sum with 4 Workers.}\label{figuretree}
\end{figure}

The whole learning algorithm of FD-SVRG is shown in Algorithm~\ref{LC-SVRGWorker}. The Coordinator is used to help sum $\w^{(l)T}\x_i^{(l)}$ from all Workers, where $\x_i^{(l)}$ is the $i^{th}$ column of $\D^{(l)}$. Here, we use a tree-structured communication~(reduce) scheme to get the global sum. An example of the tree-structured global sum with 4 Workers is shown in Figure~\ref{figuretree}. We pair the Workers so that while Worker\_1 adds the result from Worker\_2, Worker\_3 can add the result from Worker\_4 simultaneously. After the Coordinator has computed the sum, it broadcasts the sum to all Workers in a reverse-order tree structure. Similar tree-structure can be constructed for more Workers. It is faster than the strategy by which all Workers send the result directly to the Coordinator for sum, especially when the number of Workers is large.
\begin{algorithm}[tb]
  \caption{Feature-Distributed SVRG~(FD-SVRG) on the $l$th Worker}
  \label{LC-SVRGWorker}
  \small
   \begin{algorithmic}[1]
    \STATE Initialize $\eta$, $\textbf{w}_0^{(l)}$;
    \FOR{$t = 0,1,2,\cdots$}
    \STATE Compute $\textbf{w}_t^{(l)T}\textbf{D}^{(l)}$;
  %  \STATE Reduce $\textbf{w}_t^{(l)T}\textbf{D}^{(l)}$ to Coordinator and Coordinator broadcasts $\textbf{w}_t^{T}\textbf{D}$ to all Workers;
    \STATE Use tree-structured communication scheme to obtain: $\textbf{w}_t^{T}\textbf{D} = \sum_{l=1}^q \textbf{w}_t^{(l)T}\textbf{D}^{(l)}$;
    \STATE Compute the full gradient: $\textbf{z}^{(l)} = \frac{1}{N}\sum_{i=1}^N \nabla \varphi_{i}(\textbf{w}_{t}^{T}\textbf{x}_{i},y_i)\textbf{x}_{i}^{(l)}$;
   \STATE Let $\tilde{\textbf{w}}_0^{(l)} = \textbf{w}_t^{(l)}$;
   \FOR{$m = 0,1,2,\cdots, M-1$}
    \STATE Pick up an instance $\textbf{x}_{i_m}^{(l)}$ from the local data $\D^{(l)}$ with index $i_m$;
    \STATE Compute $\tilde{\textbf{w}}_{m}^{(l)T}\textbf{x}_{i_m}^{(l)}$;
    % \STATE Reduce $\tilde{\textbf{w}}_{m}^{(l)T}\textbf{x}_{i_m}^{(l)}$ to Coordinator and Coordinator broadcasts $\tilde{\textbf{w}}_{m}^{T}\textbf{x}_{i_m}$ to all Workers;
    \STATE Use tree-structured communication scheme to obtain: $\tilde{\textbf{w}}_{m}^{T}\textbf{x}_{i_m} = \sum_{l=1}^q \tilde{\textbf{w}}_{m}^{(l)T}\textbf{x}_{i_m}^{(l)}$;
     \STATE  $\tilde{\textbf{w}}_{m+1}^{(l)} = \tilde{\textbf{w}}_{m}^{(l)}- \eta(\nabla \varphi_{i_m}(\tilde{\textbf{w}}_{m}^{T}\textbf{x}_{i_m},y_{i_m})\textbf{x}_{i_m}^{(l)} - \nabla \varphi_{i_m}(\tilde{\textbf{w}}_{0}^{T}\textbf{x}_{i_m},y_{i_m})\textbf{x}_{i_m}^{(l)} + \textbf{z}^{(l)} + \nabla g_l(\tilde{\textbf{w}}_m^{(l)}))$;\label{codeupdate}
      \ENDFOR
     \STATE Set $\textbf{w}_{t+1}^{(l)} = \tilde{\textbf{w}}_{M}^{(l)}$;
    \ENDFOR
  %  \STATE Take the latest $\textbf{w}^{(l)}$ in local memory and send $\textbf{w}^{(l)}$ to Coordinator.
  \end{algorithmic}
\end{algorithm}

When computing the full gradient, the inner products of all the data are computed only one time for each outer iteration $t$ because the parameter $\w_t$ is constant for each outer iteration $t$. In line 11 of Algorithm~\ref{LC-SVRGWorker}, only $\tilde{\textbf{w}}_{m}^{T}\textbf{x}_{i_m}$ need to be received from Coordinator. The Worker doesn't need to receive $\tilde{\textbf{w}}_{0}^{T}\textbf{x}_{i_m}$ again for all $M$ inner iterations, because the Worker has received $\textbf{w}_t^{T}\textbf{D}=\tilde{\w}^T_0 \D$ when computing the full gradient for each outer iteration $t$.
%At the end of \mbox{FD-SVRG}, the Coordinator combines the parameter segments from all Workers to get the complete parameter.

%\begin{algorithm}[htb]
%  \caption{Task of Coordinator in FD-SVRG}
%  \label{LC-SVRGCoordinator}
%  \begin{algorithmic}[1]
%  \FOR{$t = 0,1,2,\cdots$}
%  \STATE Receive $\textbf{w}_t^{(1)T}\textbf{D}^{(1)},\textbf{w}_t^{(2)T}\textbf{D}^{(2)},\cdots, \textbf{w}_t^{(q)T}\textbf{D}^{(q)}$ from $q$ Workers;
%  \STATE Compute $\textbf{w}_t^{T}\textbf{D} = \sum_{l=1}^q \textbf{w}_t^{(l)T}\textbf{D}^{(l)}$;
%  \STATE Send $\textbf{w}_t^{T}\textbf{D}$ to each Worker;
%    \FOR{$m = 0,1,2,\cdots,M-1$}
%      \STATE Receive $\tilde{\textbf{w}}_{m}^{(1)T}\textbf{x}_{i_m}^{(1)},\tilde{\textbf{w}}_{m}^{(2)T}\textbf{x}_{i_m}^{(2)},\cdots, \tilde{\textbf{w}}_{m}^{(q)T}\textbf{x}_{i_m}^{(q)}$ from $q$ Workers;
%      \STATE Compute $\tilde{\textbf{w}}_{m}^{T}\textbf{x}_{i_m} = \sum_{l=1}^q \tilde{\textbf{w}}_{m}^{(l)T}\textbf{x}_{i_m}^{(l)}$,;
%     \STATE  Send $\tilde{\textbf{w}}_{m}^{T}\textbf{x}_{i_m}$ to each Worker;
%    \ENDFOR
%  \ENDFOR
%   \STATE Receive $\{\textbf{w}^{(1)}, \textbf{w}^{(2)}, \cdots,\textbf{w}^{(q)}\}$ from all the Workers and combine the complete $\textbf{w} = (\textbf{w}^{(1)},\textbf{w}^{(2)},\cdots,\textbf{w}^{(q)})$.
%  \end{algorithmic}
%\end{algorithm}

\subsection{Convergence Analysis}
It is easy to find that the update rule of FD-SVRG is exactly equivalent to that of the non-distributed~(serial) SVRG~\cite{DBLP:conf/nips/Johnson013}. Hence, the convergence property of FD-SVRG is the same as that of the non-distributed SVRG. Please note that the non-distributed SVRG has two options to get $\w_{t+1}$ in the original paper~\cite{DBLP:conf/nips/Johnson013}~(the readers can also refer to Algorithm 2 in Appendix A of the supplementary material). The authors of~\cite{DBLP:conf/nips/Johnson013} have only proved the convergence of Option II without proving the convergence of Option I. But in FD-SVRG, we prefer to choose Option I because we need to make the parameter identical for different machines with feature partitioned. If Option II is taken, there exists extra communication for the random value. In this section, we prove that Option I is also convergent with a linear convergence rate.

\begin{theorem}\label{theorem:convergence}
  Assume $f(\w)$ is $\mu$-strongly convex and each $f_i(\w)$ is $L$-smooth, which means that $\forall \u, \v$,
  \begin{align*}
    & f_i(\v) \leq f_i(\u)+\nabla f_i(\v)^T(\u-\v) + \frac{L}{2}\|\u-\v\|^2, \\
    & f(\v) \geq f(\u)+\nabla f(\v)^T(\u-\v) + \frac{\mu}{2}\|\u-\v\|^2 .
  \end{align*}
  In the $t^{th}$ outer loop, the inner loop starts with $\tilde{\w}_0$, then we have
  \begin{align*}
  \E\|\tilde{\w}_{M} - \w^*\|^2\leq (a^{M} + \frac{b}{1-a})\|\tilde{\w}_{0} - \w^*\|^2,
  \end{align*}
  where $\eta$ is the step size~(learning rate), $a= 1-\mu\eta + 2L^2\eta^2$, $b = 2L^2\eta^2$, $\w^*$ is the optimal value of $f(\w)$.
\end{theorem}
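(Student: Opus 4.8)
The plan is to reduce everything to a single one-step recursion for the inner loop and then unroll it as a geometric series. Since the paper has already observed that FD-SVRG performs exactly the serial SVRG update, I may work with the aggregated vectors. Writing $\g_m = \nabla f_{i_m}(\tilde{\w}_m) - \nabla f_{i_m}(\tilde{\w}_0) + \nabla f(\tilde{\w}_0)$ for the variance-reduced direction, the inner update is $\tilde{\w}_{m+1} = \tilde{\w}_m - \eta\g_m$, and because $i_m$ is drawn uniformly the estimator is unbiased, $\E_{i_m}[\g_m] = \nabla f(\tilde{\w}_m)$. Expanding the squared distance and taking the conditional expectation over $i_m$ gives
\begin{align*}
\E\|\tilde{\w}_{m+1}-\w^*\|^2 = \|\tilde{\w}_m-\w^*\|^2 - 2\eta\langle\nabla f(\tilde{\w}_m),\tilde{\w}_m-\w^*\rangle + \eta^2\E\|\g_m\|^2,
\end{align*}
so the whole argument reduces to lower bounding the inner-product term and upper bounding $\E\|\g_m\|^2$.

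For the inner-product term, $\mu$-strong convexity of $f$ together with optimality of $\w^*$ yields $\langle\nabla f(\tilde{\w}_m),\tilde{\w}_m-\w^*\rangle \ge f(\tilde{\w}_m)-f(\w^*)+\tfrac{\mu}{2}\|\tilde{\w}_m-\w^*\|^2 \ge \tfrac{\mu}{2}\|\tilde{\w}_m-\w^*\|^2$, which, after multiplication by $-2\eta$, produces exactly the $-\mu\eta$ contribution to $a$.

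The main work is the variance bound, which must come out with the constant $2L^2$ in order to match both $a$ and $b$. I would split the direction as $\g_m = [\nabla f_{i_m}(\tilde{\w}_m)-\nabla f_{i_m}(\w^*)] + [\nabla f_{i_m}(\w^*)-\nabla f_{i_m}(\tilde{\w}_0)+\nabla f(\tilde{\w}_0)]$, using $\nabla f(\w^*)=\0$ and observing that the second bracket has zero mean under $i_m$. Applying $\|\a+\b\|^2\le 2\|\a\|^2+2\|\b\|^2$, bounding the first bracket by $L$-smoothness of $f_{i_m}$, and bounding the (centered) second bracket via the identity $\E\|X-\E X\|^2\le\E\|X\|^2$ followed again by $L$-smoothness, gives
\begin{align*}
\E\|\g_m\|^2 \le 2L^2\|\tilde{\w}_m-\w^*\|^2 + 2L^2\|\tilde{\w}_0-\w^*\|^2 .
\end{align*}
The delicate point is to use the centering identity here rather than a naive triangle inequality across the three raw terms of $\g_m$; the latter inflates the constant and would break the match with $b=2L^2\eta^2$. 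I expect this variance estimate to be the principal obstacle.

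Substituting both bounds yields the one-step recursion $\E\|\tilde{\w}_{m+1}-\w^*\|^2 \le a\,\|\tilde{\w}_m-\w^*\|^2 + b\,\|\tilde{\w}_0-\w^*\|^2$ with $a=1-\mu\eta+2L^2\eta^2$ and $b=2L^2\eta^2$. Taking total expectations and iterating from $m=0$ to $M-1$ (with $\tilde{\w}_0$ held fixed as the inner-loop start), the homogeneous part contributes $a^M\|\tilde{\w}_0-\w^*\|^2$ while the inhomogeneous part sums to $b\,\|\tilde{\w}_0-\w^*\|^2\sum_{j=0}^{M-1}a^j$. Provided the step size is small enough that $0<a<1$, this geometric sum is at most $1/(1-a)$, so the inhomogeneous part is bounded by $\tfrac{b}{1-a}\|\tilde{\w}_0-\w^*\|^2$, giving precisely the claimed bound $\E\|\tilde{\w}_M-\w^*\|^2 \le (a^M+\tfrac{b}{1-a})\|\tilde{\w}_0-\w^*\|^2$. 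I would finish by recording the step-size condition $a<1$, i.e. $\eta<\mu/(2L^2)$, under which $\tfrac{b}{1-a}$ is meaningful and the contraction is genuine.
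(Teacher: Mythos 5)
Your proposal is correct and takes essentially the same route as the paper's own proof: the identical decomposition of $\g_m$ through $\nabla f_{i_m}(\w^*)$, the same centering inequality $\E\|\xi-\E[\xi]\|^2\le\E\|\xi\|^2$ to secure the $2L^2$ constants, the same strong-convexity bound on the inner product (dropping the nonnegative function-value gap), and the same geometric unrolling to $a^M+\frac{b}{1-a}$. Your explicit step-size condition $\eta<\mu/(2L^2)$ guaranteeing $0<a<1$ is a minor point the paper only notes informally after the theorem, and does not constitute a different approach.
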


\begin{proof}
Let $\g_m = \nabla f_{i_m}(\tilde{\w}_m) - \nabla f_{i_m}(\tilde{\w}_0) + \z$, where $\z = \nabla f(\tilde{\w}_0)$ is the full gradient. Then we get $\E[\g_m|\tilde{\w}_m] = \nabla f(\tilde{\w}_m)$. According to the update rule, we have
\begin{align*}
& \|\tilde{\w}_{m+1} - \w^*\|^2 \\
= &\|\tilde{\w}_m - \w^* - \eta \g_m\|^2 \\
= &\|\tilde{\w}_m - \w^*\|^2 - 2\eta \g_m^T(\tilde{\w}_m - \w^*) + \eta^2\|\g_m\|^2.
\end{align*}
According to the definition of $\g_m$, we obtain
\begin{align*}
& \E[\|\g_m\|^2|\tilde{\w}_m] \\
= & \frac{1}{N}\sum_{i=1}^{N} \|\nabla f_{i}(\tilde{\w}_m) - \nabla f_{i}(\tilde{\w}_0) + \z\|^2 \\
\leq & \frac{2}{N}\sum_{i=1}^{N} \|\nabla f_{i}(\tilde{\w}_m) - \nabla f_{i}(\w^*)\|^2 \\
     & + \frac{2}{N}\sum_{i=1}^{N} \|\nabla f_{i}(\w^*) - \nabla f_{i}(\tilde{\w}_0) + \z\|^2 \\
\leq & \frac{2}{N}\sum_{i=1}^{N} \|\nabla f_{i}(\tilde{\w}_m) - \nabla f_{i}(\w^*)\|^2 \\
     & + \frac{2}{N}\sum_{i=1}^{N} \|\nabla f_{i}(\w^*) - \nabla f_{i}(\tilde{\w}_0)\|^2 \\
\leq & 2L^2\|\tilde{\w}_m - \w^*\|^2 + 2L^2\|\tilde{\w}_0 - \w^*\|^2.
\end{align*}
The second inequality uses the fact that $\nabla f(\w^*) = 0$ and $\E \|\xi - \E[\xi]\|^2 \leq \E \|\xi\|^2$. The last inequality uses the smooth property of $f_i(\w)$. Then we have
\begin{align*}
& \E[\|\tilde{\w}_{m+1} - \w^*\|^2] \\
= & \|\tilde{\w}_m - \w^*\|^2 - 2\eta \E[\g_m|\tilde{\w}_m]^T(\tilde{\w}_m - \w^*) \\
  & + \eta^2\E[\|\g_m\|^2|\tilde{\w}_m] \\
\leq & \|\tilde{\w}_m - \w^*\|^2 - \2\eta\nabla f(\tilde{\w}_m)(\tilde{\w}_m-\w^*) \\
     & + 2\eta^2L^2(\|\tilde{\w}_m - \w^*\|^2 + \|\tilde{\w}_0 - \w^*\|^2).
\end{align*}
Using the strongly convex property of $f(\w)$, we obtain
\begin{align*}
& \E[\|\tilde{\w}_{m+1} - \w^*\|^2\|\tilde{\w}_m] \\
\leq & (1-\mu\eta)\|\tilde{\w}_m - \w^*\|^2 - 2\eta(f(\tilde{\w}_m)-f(\w^*)) \\
     & + 2\eta^2L^2(\|\tilde{\w}_m - \w^*\|^2 + \|\tilde{\w}_0 - \w^*\|^2) \\
\leq & (1-\mu\eta + 2L^2\eta^2)\|\tilde{\w}_m - \w^*\|^2 + 2L^2\eta^2\|\tilde{\w}_0 - \w^*\|^2.
\end{align*}
For convenience, let $a= 1-\mu\eta + 2L^2\eta^2$, $b = 2L^2\eta^2$. Taking expectation on the above equation, we obtain
\begin{align*}
& \E\|\tilde{\w}_{m+1} - \w^*\|^2 \\
\leq & a \E\|\tilde{\w}_{m} - \w^*\|^2 + b\|\tilde{\w}_0 - \w^*\|^2 \\
\leq &(a^{m+1} + \frac{b}{1-a})\|\tilde{\w}_{0} - \w^*\|^2.
\end{align*}
Let $m = M-1$, we obtain the result in Theorem~\ref{theorem:convergence}.
\end{proof}

Please note that for the $t^{th}$ outer loop, $\tilde{\w}_0$ actually denotes $\w_t$, and $\tilde{\w}_M$ is actually $\w_{t+1}$. Based on Theorem~\ref{theorem:convergence}, we have
 \begin{align*}
  \E\|\w_t - \w^*\|^2\leq (a^{M} + \frac{b}{1-a})^t\|\w_{0} - \w^*\|^2.
  \end{align*}
When the step size $\eta$ is small enough, $a^M + \frac{b}{1-a} < 1$. It means that our FD-SVRG has a linear convergence rate.

\subsection{Implementation Details}

\subsubsection{Mini-Batch}
FD-SVRG can take a mini-batch strategy as described in~\cite{DBLP:conf/nips/ZhaoYWAL14}. In each iteration, Workers sample a batch of data with batch size $u$. Then, $u$ inner products are computed once and the $u$ scalars are communicated together. Taking a mini-batch cannot reduce the total communication of FD-SVRG, but it can reduce the communication frequency~(times).

\subsection{Complexity Analysis}\label{sec:complexity}

For an iteration of the outer loop in SVRG, there will be $N+M$ gradients to be computed. Here, $M$ is set as the number of local data instances in general. For the instance-distributed methods, data is partitioned by $q$ machines, and each machine has $\frac{N}{q}$ instances. DSVRG sets $M = \frac{N}{q}$. Then each machine will compute $\frac{N}{q}$ gradients in inner loops. There is only one machine at work in inner loops for DSVRG. So it computes $N(1+\frac{1}{q})$ gradients in total during an iteration. When computing full gradient, the center of DSVRG sends parameter to each machine, then receives gradients from each machine. The communication cost is $2qd$. In inner loops, center sends full gradient to machine\_$l$ which is at work. The machine\_$l$ iteratively updates parameter and then returns parameter to center after the iterative updating. The communication is $2d$. So the total communication cost is $2qd+2d$. That means DSVRG computes $N$ gradients with communication cost of $\frac{2qd+2d}{1+\frac{1}{q}} = 2qd$.

For FD-SVRG, when computing one gradient, the communication cost of a tree is $2q$. For the example in Figure~\ref{figuretree}, there are 4 Workers and the communication cost is 8 scalars~(solid arrow). So the total communication cost is $2qN$ for computing $N$ gradients. Compared to the $2qd$ of DSVRG, we can find that when $d>N$, FD-SVRG has lower communication cost. Note that DSVRG only parallels the SVRG algorithm in computing full gradients. Our method parallels the SVRG algorithm both in computing full gradients and in the inner loops. It means that to compute the same number of gradients, the average time of FD-SVRG is lower than that of DSVRG. Furthermore, FD-SVRG also parallels the communication by the tree-structured communication strategy, which can reduce communication time. Hence, FD-SVRG is expected to be much faster than DSVRG, which will be verified in our experiments.

SynSVRG and AsySVRG can be implemented with the Parameter Server framework~(refer to Appendix B of the supplementary material). They need to send many times of vectors in the inner loops of SVRG. The communication cost of them is $O(N+d)$, which is much higher than those of DSVRG and FD-SVRG.

\section{Experiments}

In this section, we choose logistic regression~(LR) with a $L_2$-norm regularization term to conduct experiments. Hence, the formula~(\ref{eq:objectiveF}) is defined as follows:
\begin{equation}\label{equationminlog}
  \min_\textbf{w}  \frac{1}{N} \sum_{i=1}^N \left[\log (1 + e^{-y_i\textbf{w}^T\textbf{x}_i}) + \frac{\lambda}{2}||\textbf{w}||^2\right].
\end{equation}
All the experiments are performed on a cluster of several machines~(nodes) connected by 10GB Ethernet. Each machine
has 12 Intel Xeon E5-2620 cores with 96GB memory.

\subsection{Data Sets}
We use four data sets for evaluation. They are news20, url, webspam and kdd2010. All of these data sets can be downloaded from the LibSVM website\footnote{\url{https://www.csie.ntu.edu.tw/~cjlin/libsvmtools/datasets/}}. The detailed information about these data sets is summarized in Table~\ref{tabledata}. We use 8 Workers to train news20 because it is relatively smaller. For other data sets, 16 Workers are used for training.
\begin{table}
\small
  \centering
   \caption{Data sets for evaluation}\label{tabledata}
  \begin{tabular}{|c|c|c|}
  \hline
  % after \\:  or \cline{col1-col2} \cline{col3-col4} ...

  \textbf{Data set} & \textbf{Features} ($d$) & \textbf{Instances} ($N$)\\\hline
  news20 & 1,355,191 & 19,954   \\\hline
  url & 3,231,961 & 2,396,130  \\\hline
  webspam & 16,609,143 & 350,000  \\\hline
  kdd2010 & 29,890,095 & 19,264,097  \\\hline
\end{tabular}
\end{table}

\subsection{Experimental Setting}
We chose DSVRG, AsySVRG and SynSVRG as baselines. AsySVRG and SynSVRG are implemented on Parameter Server framework~(refer to Appendix B of the supplementary material). Besides Workers, AsySVRG and SynSVRG need extra machines for Servers. In our experiments, we take 8 Servers for AsySVRG and 4 Servers for SynSVRG. The number of Workers for AsySVRG and SynSVRG is the same as that for FD-SVRG and DSVRG. The $\textbf{w}$ is initialized with $\textbf{0}$. We set the number of inner loops of each method to be the number of training instances on each Worker. The step-size $\eta$ is fixed during training.

\subsection{Efficiency Comparison} \label{sectionefficiencycomparison}

\begin{figure*}[ht]
  \centering
  \subfigure[news20]{ \includegraphics[width=.23\textwidth]{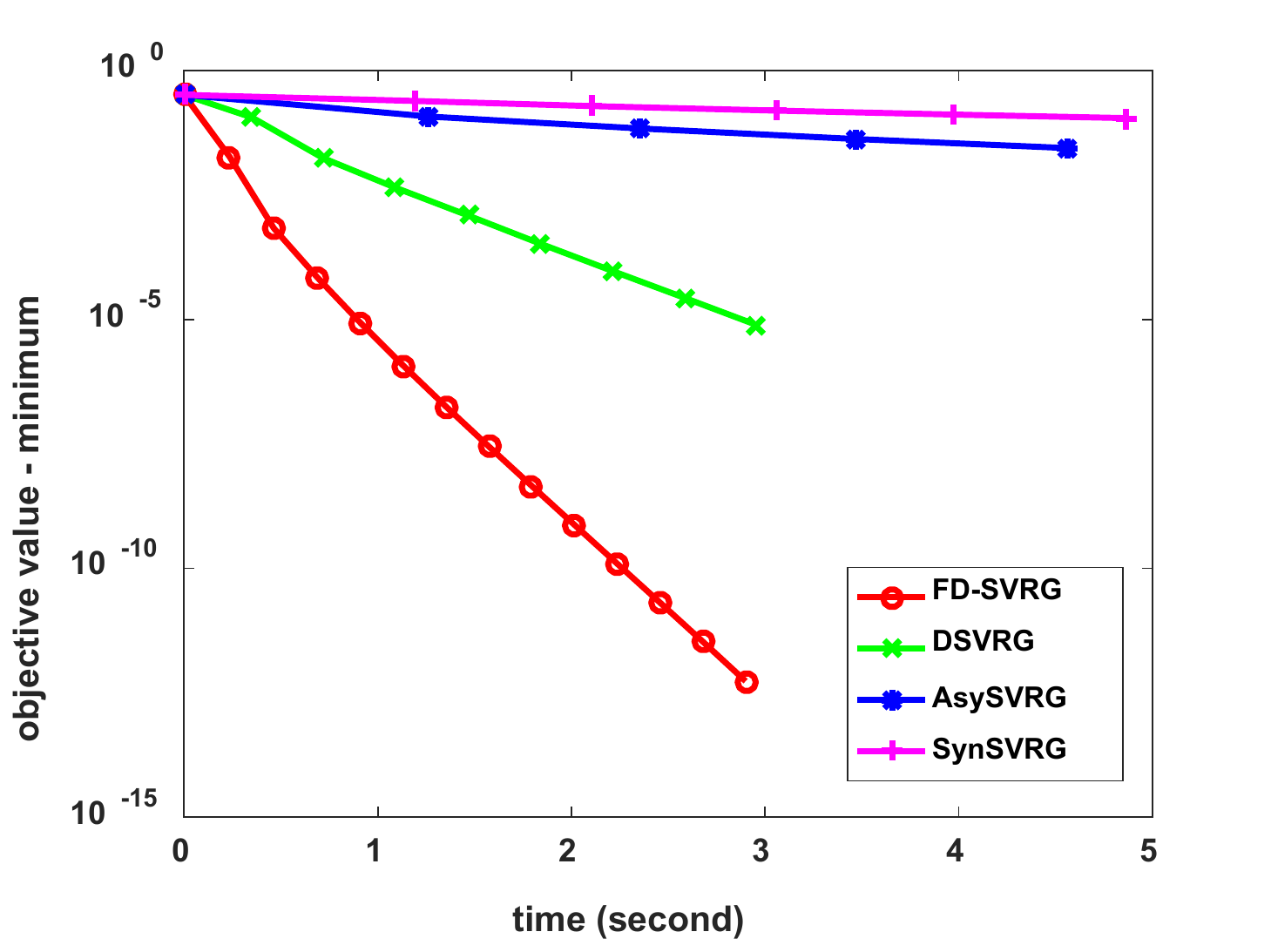}}
  \subfigure[url]{ \includegraphics[width=.23\textwidth]{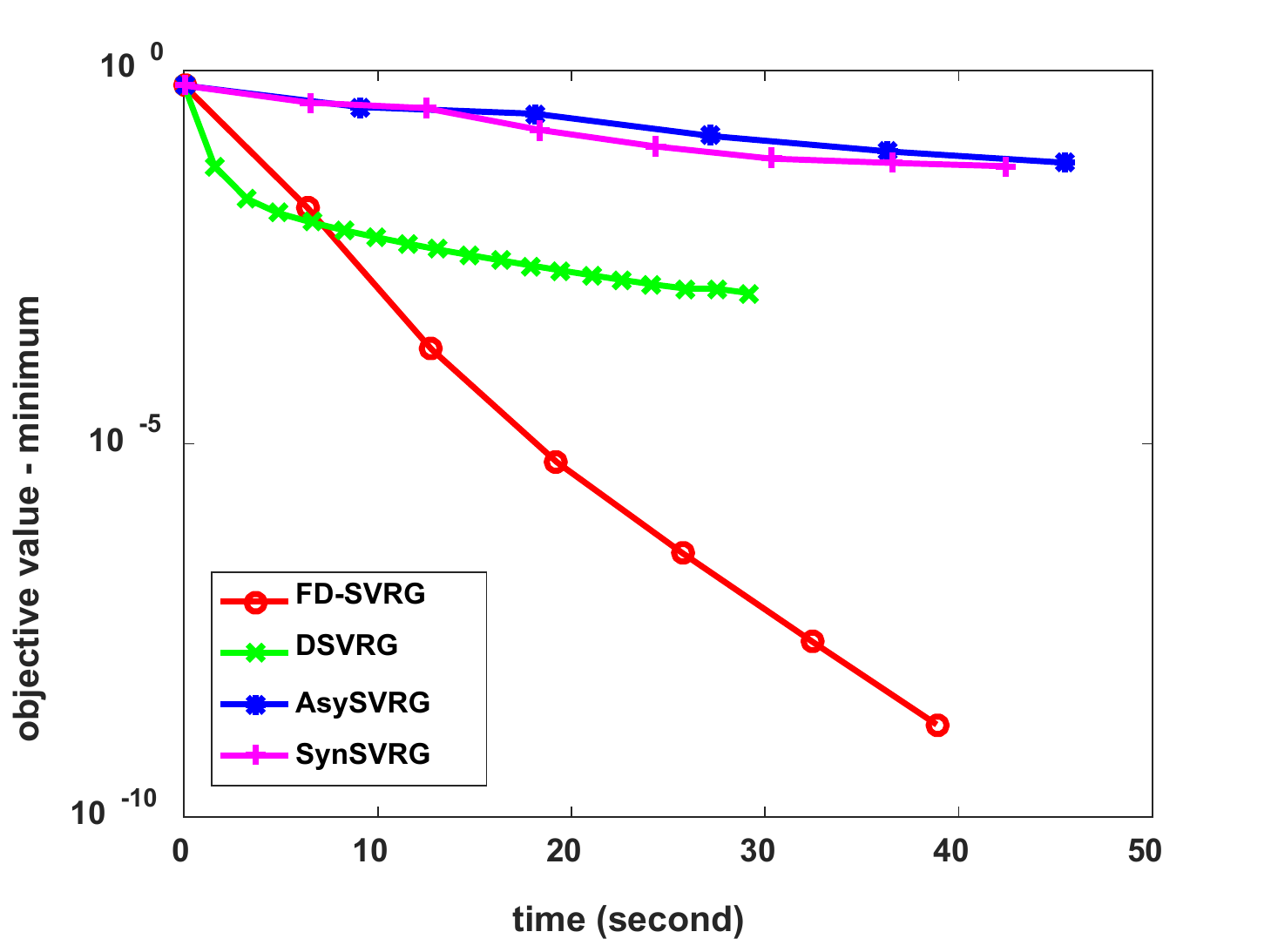}}
  \subfigure[webspam]{ \includegraphics[width=.23\textwidth]{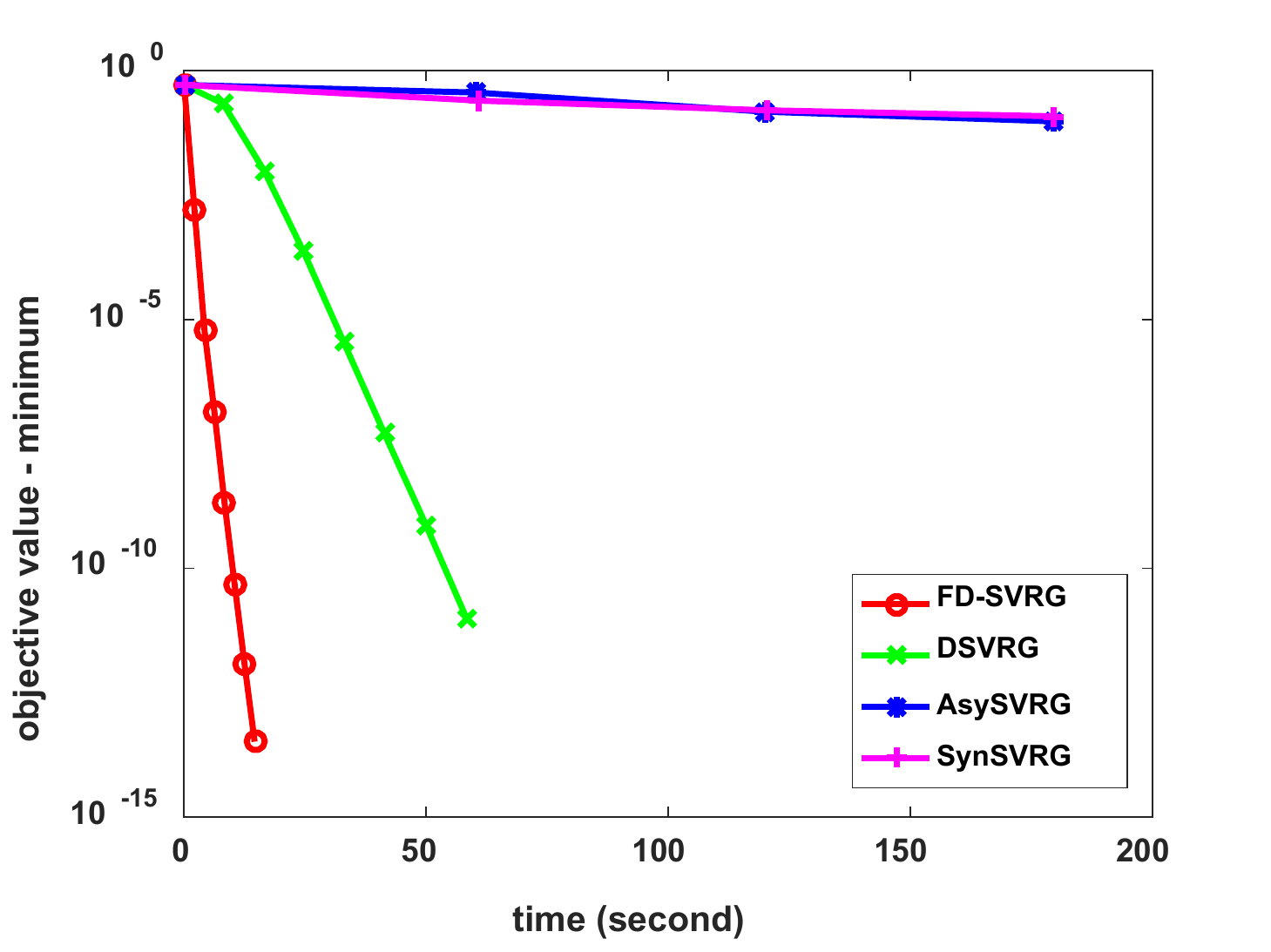}}
  \subfigure[kdd2010]{ \includegraphics[width=.23\textwidth]{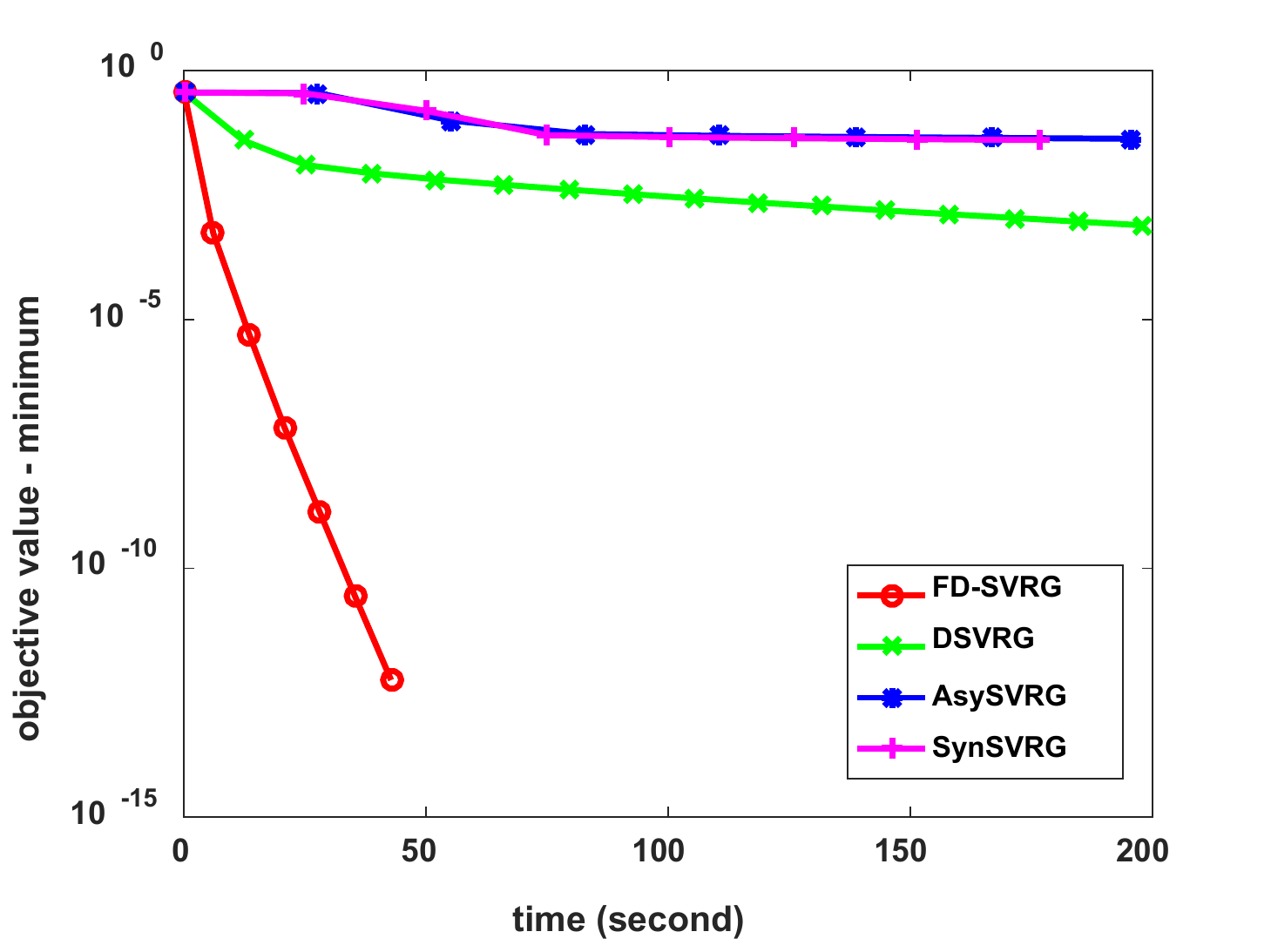}}
  \vspace{-0.3cm}
  \caption{Efficiency comparison in terms of wall-clock time.}\label{efficiencycomparison}
\end{figure*}

\begin{figure*}[ht]
  \centering
  \subfigure[news20]{ \includegraphics[width=.23\textwidth]{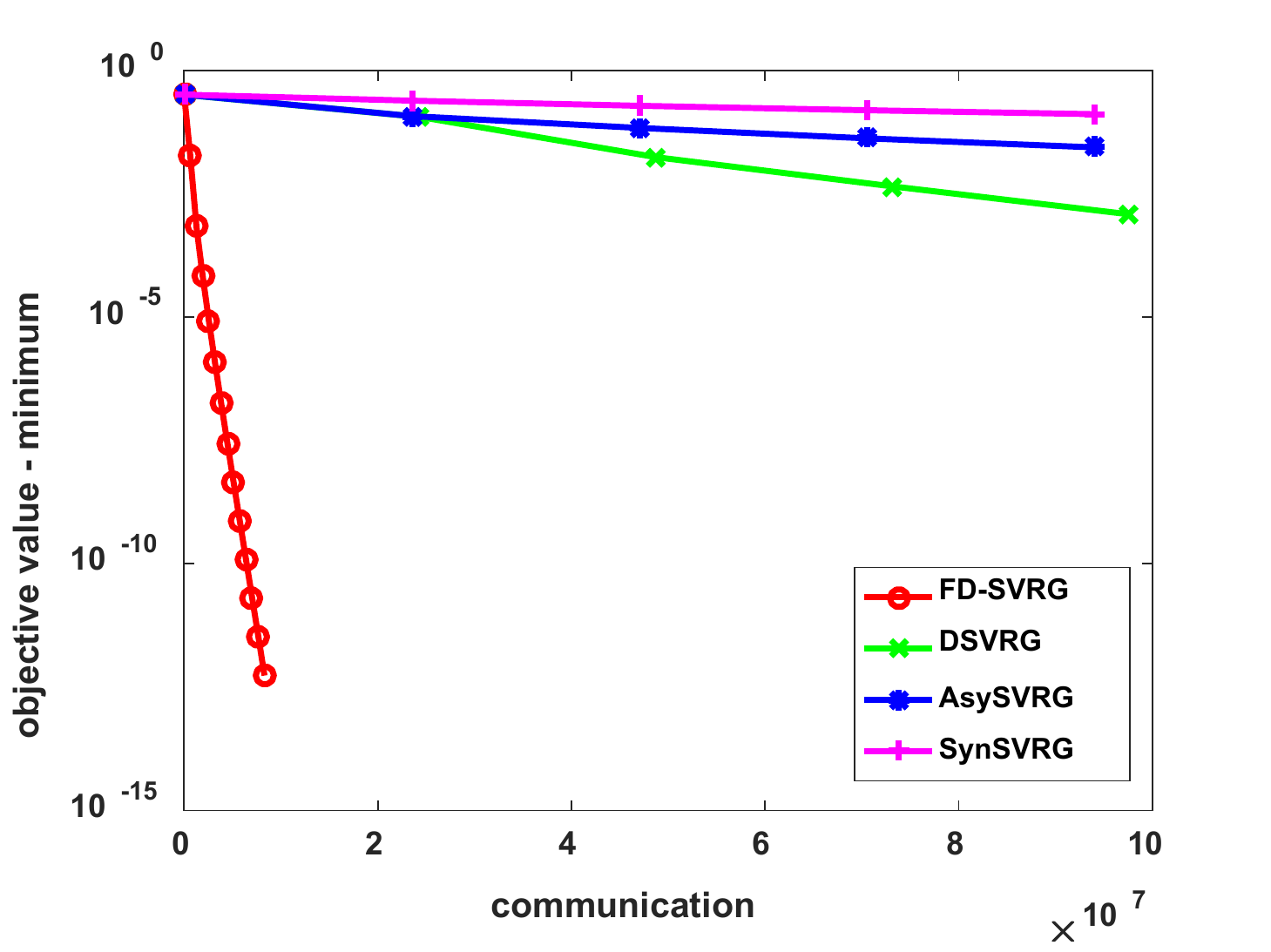}}
  \subfigure[url]{ \includegraphics[width=.23\textwidth]{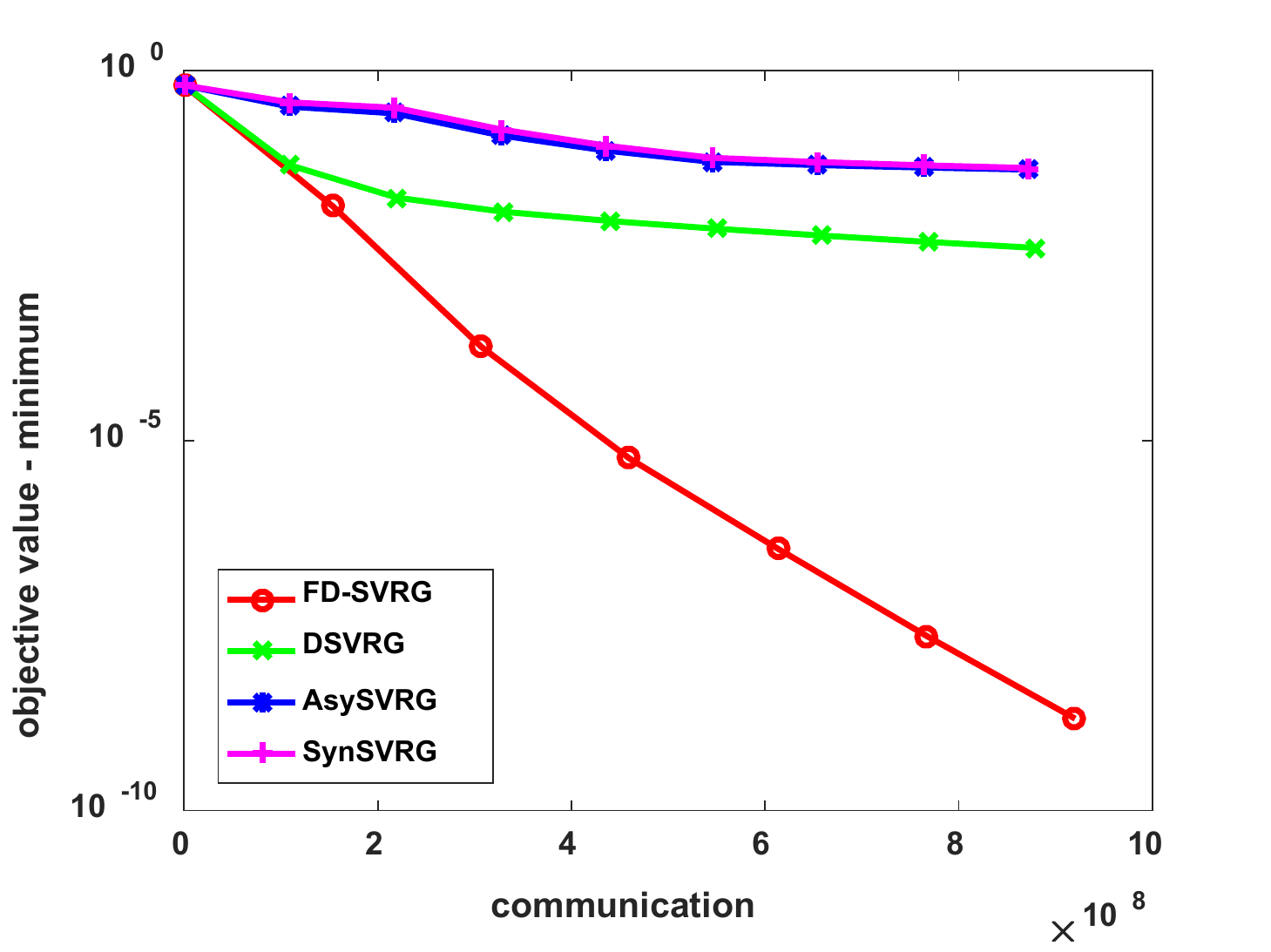}}
  \subfigure[webspam]{ \includegraphics[width=.23\textwidth]{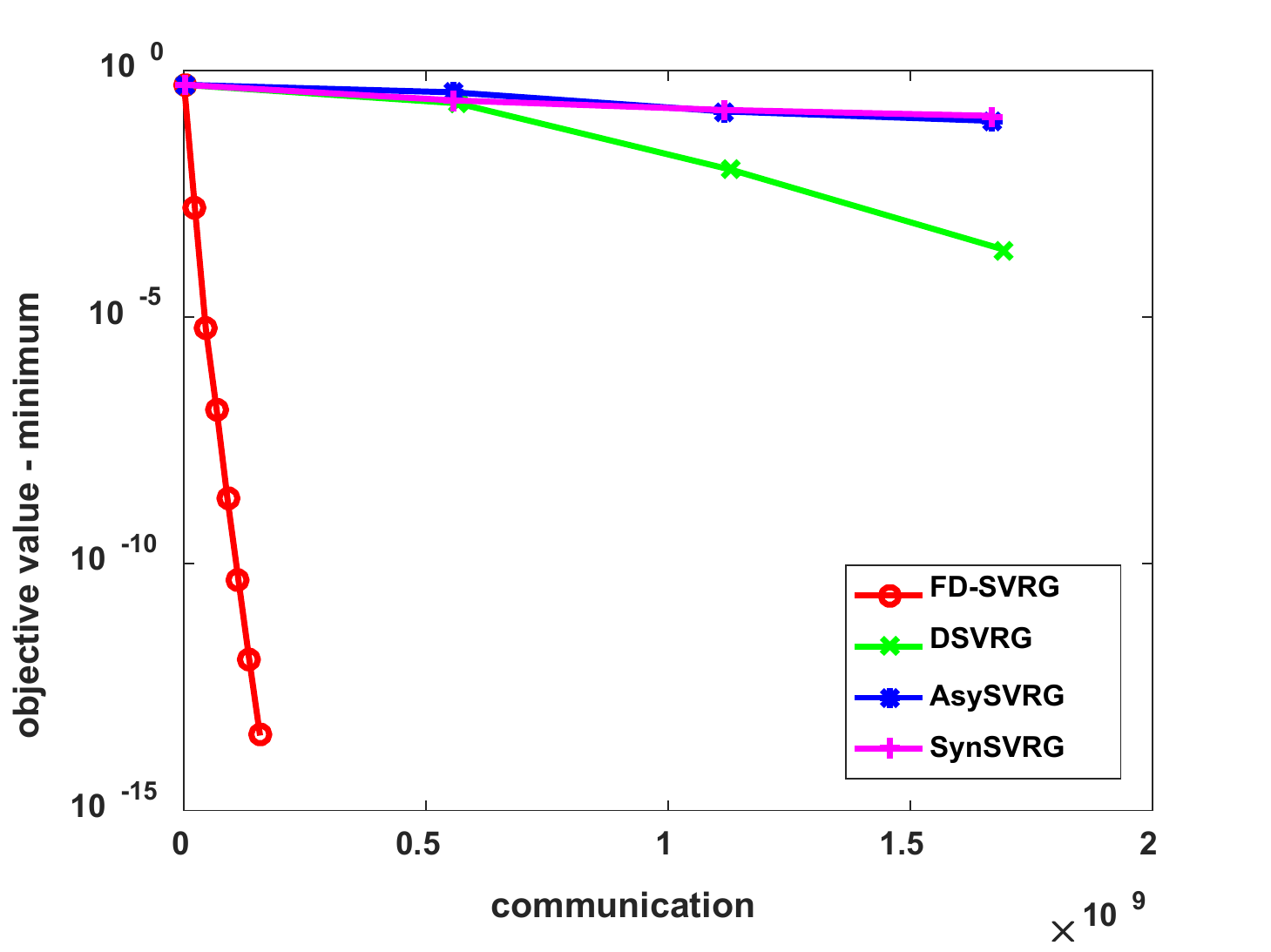}}
   \subfigure[kdd2010]{ \includegraphics[width=.23\textwidth]{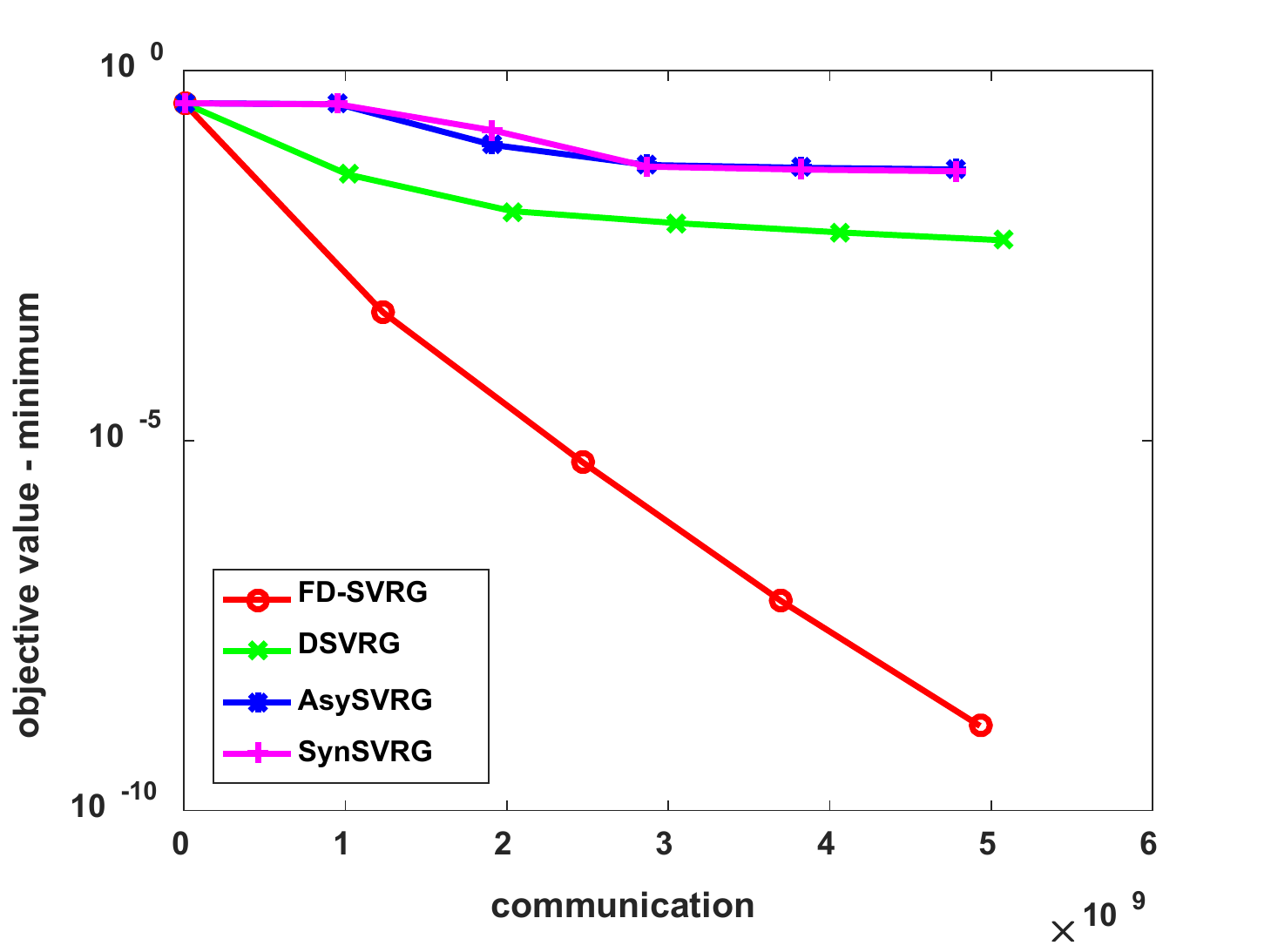}}
  \vspace{-0.3cm}
  \caption{Efficiency comparison in terms of communication cost.}\label{efficiencycommunication}
\end{figure*}

When the regularization hyper-parameter $\lambda$ is $10^{-4}$, the convergence results are shown in Figure~\ref{efficiencycomparison} and Figure~\ref{efficiencycommunication}. In both figures, the vertical axis denotes the gap between the objective function value and the optimal solution value. In Figure~\ref{efficiencycomparison}, the horizontal axis denotes the wall-clock time~(in seconds) used by different methods. In Figure~\ref{efficiencycommunication}, the horizontal axis is the communication cost which denotes how many scalars has been communicated. A $d$ dimensional vector is considered to be $d$ scalars in communication cost. We can find that FD-SVRG achieves the best performance, in terms of both wall-clock time and communication cost.
%Other $\lambda$ has similar phenomenon and the results are shown in Appendix~\ref{appendix:lambda}.

%\begin{figure}
%  \centering
% % \subfloat[news20]{ \includegraphics[width=.25\textwidth]{figure/news20_pslite.pdf}}
%  \subfloat[url]{ \includegraphics[width=.25\textwidth]{figure/url_pslite.pdf}}
%  \subfloat[webspam]{ \includegraphics[width=.25\textwidth]{figure/webspam_pslite.pdf}}
%  \caption{Efficiency comparison with PS-Lite~(SGD).}\label{efficiencycomparisonsgd}
%\end{figure}

Because DSVRG is the fastest baseline, we choose DSVRG as a baseline to observe our method's speedup compared to DSVRG. The result is shown in Table~\ref{tablespeedup}, in which the time is recorded when the gap between the objective function value and the optimal value is less than $10^{-4}$. The time of the two methods on the four data sets is on the upper half of Table \ref{tablespeedup}. The lower half part of Table \ref{tablespeedup} is our method's speedup compared to DSVRG. We can find that our method is several times faster than DSVRG.
\begin{table}
\small
  \centering
   \caption{Speedup to DSVRG}\label{tablespeedup}
  \begin{tabular}{|c|c|c|c|}
  \hline
  % after \\:  or \cline{col1-col2} \cline{col3-col4} ...
  \multicolumn{2}{|c|}{} & \textbf{DSVRG} & \textbf{FD-SVRG}\\\hline
     & news20  & 2.83  & 0.68 \\\cline{2-4}
   \textbf{Time}  & url & 119.1 &  19.24  \\\cline{2-4}
   \textbf{(in second)}        & webspam & 33.01 & 4.23  \\\cline{2-4}
   & kdd2010 & 400.35 & 13.39  \\\hline
   & news20 & 1 & \textbf{4.16}  \\\cline{2-4}
   \textbf{ Speedup}        & url & 1 & \textbf{6.19}   \\\cline{2-4}
           & webspam & 1 & \textbf{7.8} \\\cline{2-4}
            & kdd2010 & 1 & \textbf{29.9} \\\hline
\end{tabular}
\end{table}

%\begin{figure}
%  \centering
%  \subfloat[$\lambda = 10^{-3}$]{ \includegraphics[width=.25\textwidth]{figure/webspam_communication_3.pdf}}
% % \subfloat[$\lambda = 10^{-5}$]{ \includegraphics[width=.25\textwidth]{figure/webspam_communication_5.pdf}}
%  \subfloat[$\lambda = 10^{-6}$]{ \includegraphics[width=.25\textwidth]{figure/webspam_communication_6.pdf}}
%  \caption{Efficiency comparison in terms of of communication cost for different $\lambda$}\label{lambdacommunication}
%\end{figure}

%Because DSVRG is the fastest baseline, we choose DSVRG as a baseline to observe our method's speedup compared to DSVRG. Besides, PS-Lite is a popular Parameter Server framework, which has been widely used by both academic and industry. MLlib \cite{DBLP:journals/jmlr/MengBYSVLFTAOXX16} is a distributed machine learning library with master-slave framework on Spark. We also compare FD-SVRG to PS-Lite and MLlib. In particular, we implemented an asynchronous SGD on the PS-Lite which is provided by the authors of \cite{DBLP:conf/osdi/LiAPSAJLSS14}. We denote it as PS-Lite~(SGD). MLlib is mainly based on two methods: mini-batch based distributed SGD and distributed lbfgs. We chose lbfgs as baseline because we find that lbfgs is faster on Spark in our experiments.

PS-Lite~\cite{DBLP:conf/osdi/LiAPSAJLSS14} has been widely used by both academy and industry. We also compare FD-SVRG with PS-Lite. The original implementation of PS-Lite is based on SGD~\cite{DBLP:conf/osdi/LiAPSAJLSS14}. We denote it as PS-Lite~(SGD). In particular, PS-Lite~(SGD) is an asynchronous SGD implemented based on PS-Lite which is provided by the authors of~\cite{DBLP:conf/osdi/LiAPSAJLSS14}. We summarize the speedup to \mbox{PS-Lite}~(SGD) in Table~\ref{tablesgdspeedup}. The time is recoded when the gap between the objective function value and the optimal value is less than $10^{-4}$. We can find that our method is hundreds even thousands of times faster than PS-Lite~(SGD).

To evaluate the influence of the regularization hyper-parameter $\lambda$, we choose webspam data set to conduct experiments by setting $\lambda = 10^{-3},10^{-5}$. The results are shown in Figure~\ref{lambdacomparison}. Once again, our method achieves the best performance in both cases.

\begin{table}
\small
  \centering
   \caption{Speedup to PS-Lite~(SGD)}\label{tablesgdspeedup}
  \begin{tabular}{|c|c|c|c|}
  \hline
  % after \\:  or \cline{col1-col2} \cline{col3-col4} ...
  \multicolumn{2}{|c|}{} & \textbf{PS-Lite (SGD)} & \textbf{FD-SVRG}\\\hline
      & news20  & $>$1000  & 0.69 \\\cline{2-4}
  \textbf{Time}  & url & $>$2000 & 19.25   \\\cline{2-4}
    \textbf{(in second)}       & webspam & 827.12 & 4.23  \\\cline{2-4}
    & kdd2010 & $>$2000 & 13.39  \\\hline
   & news20 & 1 & \textbf{$>$1449}  \\\cline{2-4}
      \textbf{ Speedup}     & url & 1 & \textbf{$>$103}   \\\cline{2-4}
           & webspam & 1 & \textbf{196} \\\cline{2-4}
            & kdd2010 & 1 & \textbf{$>$149} \\\hline
\end{tabular}
\end{table}

\begin{figure}[ht]
  \centering
  \subfigure[$\lambda = 10^{-3}$]{ \includegraphics[width=.23\textwidth]{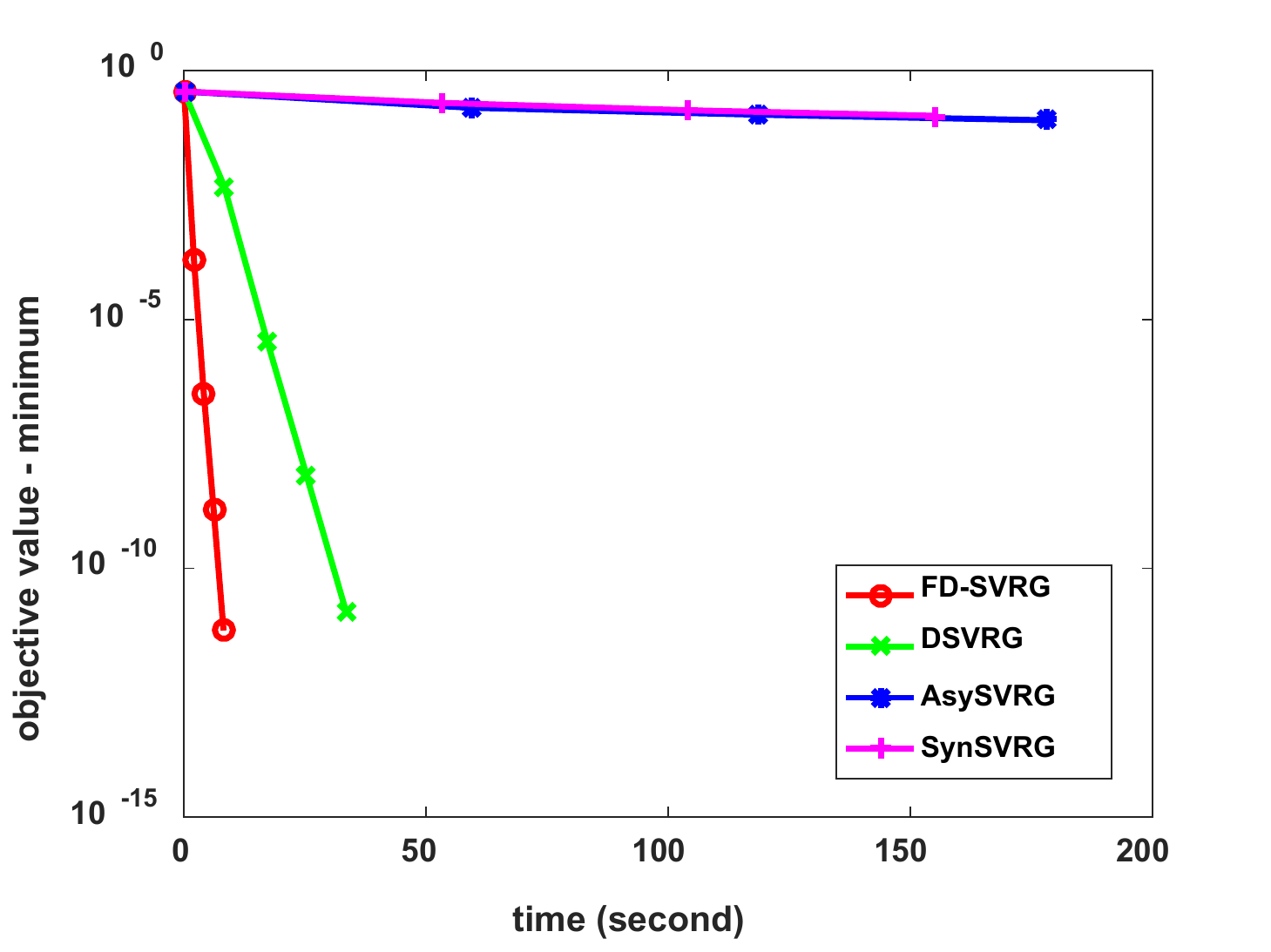}}
  \subfigure[$\lambda = 10^{-5}$]{ \includegraphics[width=.23\textwidth]{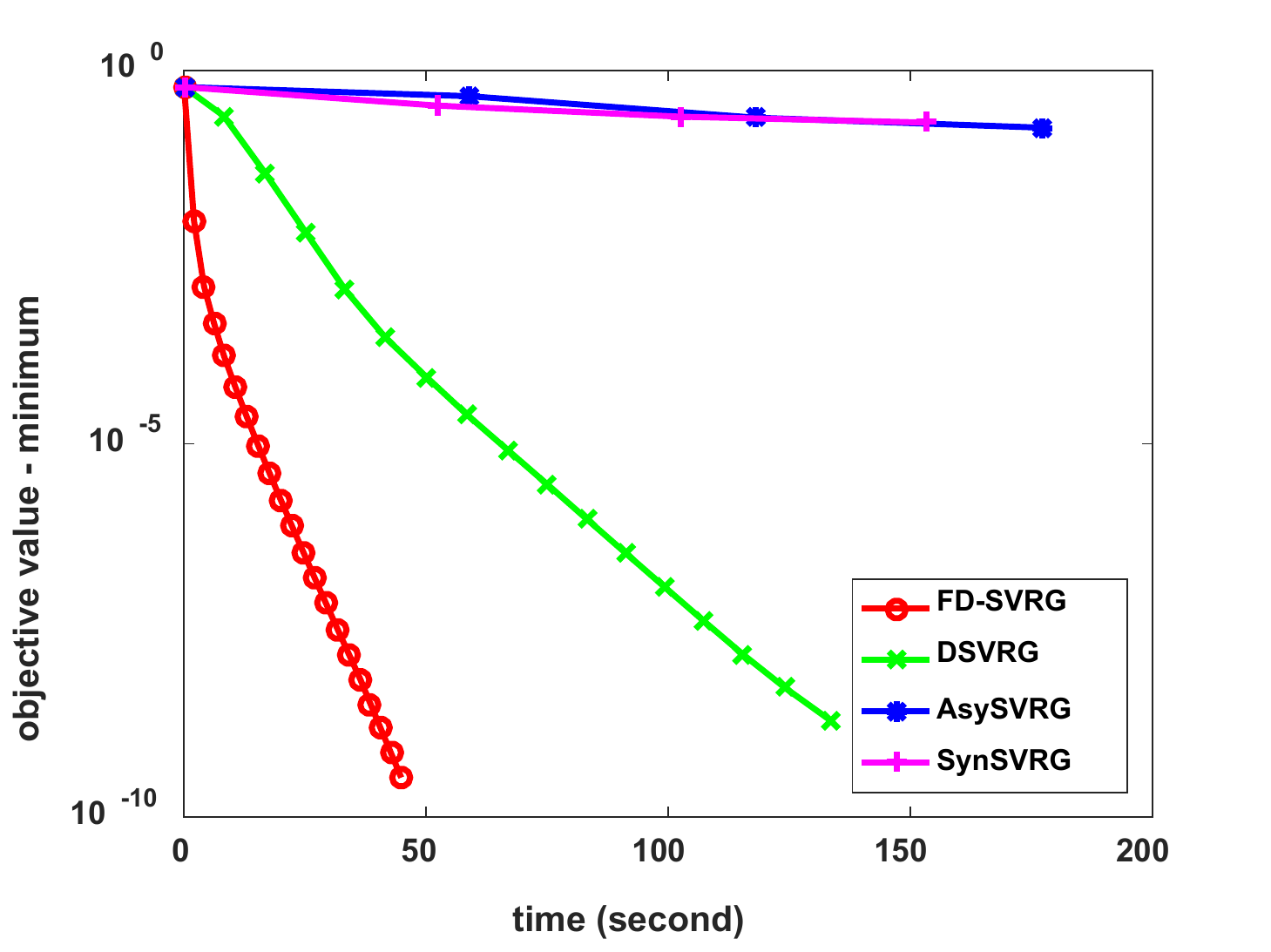}}
 % \subfigure[$\lambda = 10^{-6}$]{ \includegraphics[width=.152\textwidth]{figure/webspam_time_6.pdf}}
  \vspace{-0.3cm}
  \caption{Efficiency comparison in terms of wall-clock time for different $\lambda$.}\label{lambdacomparison}
\end{figure}

\subsection{Scalability}
By changing the number of Workers, we can compute a speedup for FD-SVRG. The speedup is defined as follows:
\begin{equation}
 \nonumber
  speedup = \frac{run\ time\ with\ 1\ Worker}{run\ time\ with\ q\ Workers}
\end{equation}
We take the Worker number to be 1, 4, 8, 16. When the gap between the objective function value and the optimal solution value is less than $10^{-4}$, we stop the training process and record the time.

The speedup result is shown in Figure~\ref{webspeedup}. We can find that FD-SVRG has a speedup close to ideal result. Hence, FD-SVRG has a strong scalability to handle large-scale problems.
\begin{figure}[ht]
  \centering
  \includegraphics[width=.23\textwidth]{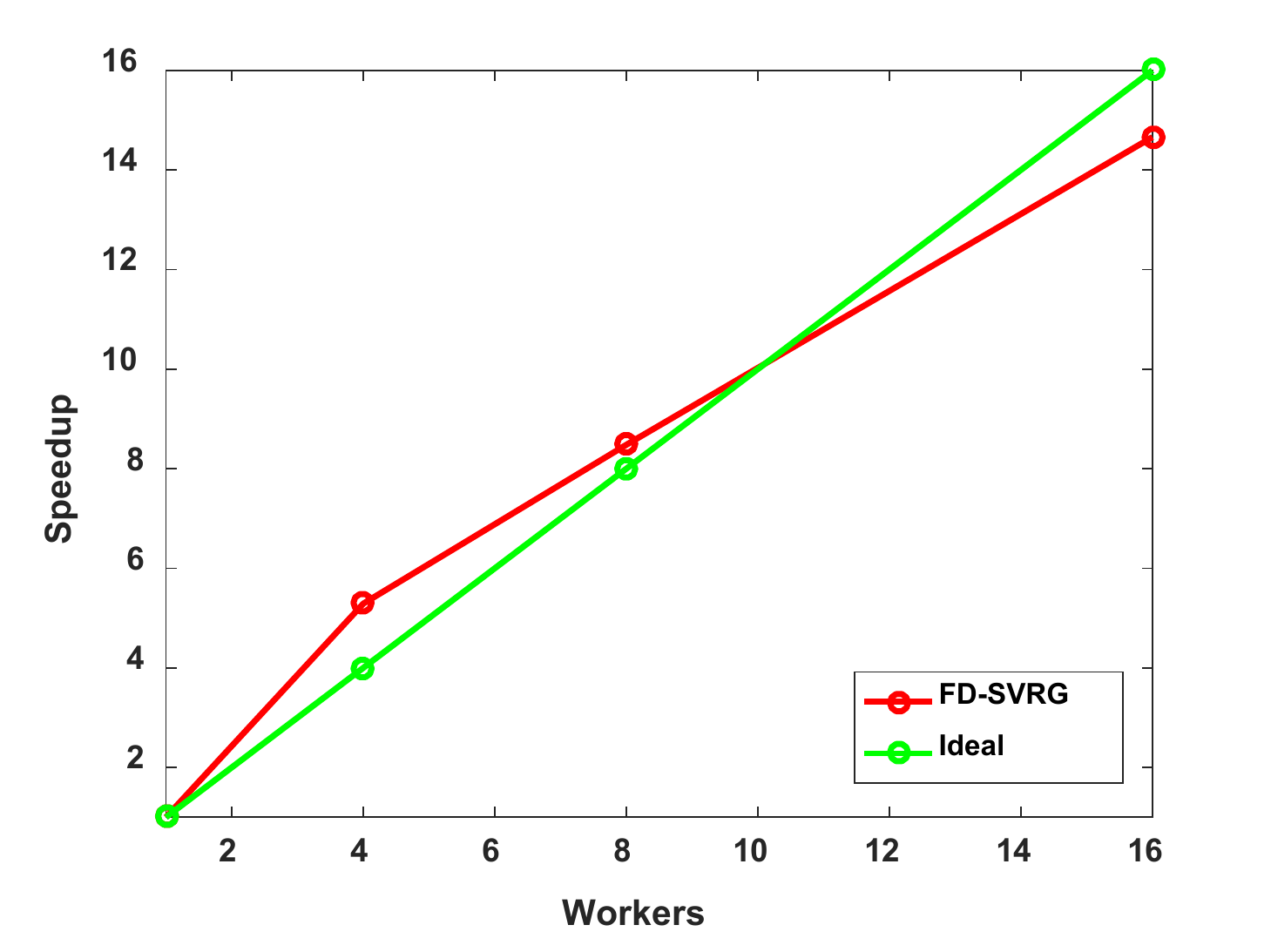}
 \vspace{-0.3cm}
  \caption{Speedup of FD-SVRG on webspam.}\label{webspeedup}
\end{figure}

\section{Conclusion}
Most existing distributed learning methods for linear classification are instance-distributed, which cannot achieve satisfactory results for high-dimensional applications when the dimensionality is larger than the number of instances. In this paper, we propose a novel distributed learning method, called FD-SVRG, by adopting a feature-distributed data partition strategy. Experimental results show that our method can achieve the best performance for cases when the dimensionality is larger than the number of instances.

% Acknowledgements should only appear in the accepted version.
%\section*{Acknowledgements}
%
%\textbf{Do not} include acknowledgements in the initial version of
%the paper submitted for blind review.
%
%If a paper is accepted, the final camera-ready version can (and
%probably should) include acknowledgements. In this case, please
%place such acknowledgements in an unnumbered section at the
%end of the paper. Typically, this will include thanks to reviewers
%who gave useful comments, to colleagues who contributed to the ideas,
%and to funding agencies and corporate sponsors that provided financial
%support.

% In the unusual situation where you want a paper to appear in the
% references without citing it in the main text, use \nocite
%\nocite{langley00}

\bibliography{bibfile}
\bibliographystyle{icml2018}

%%%%%%%%%%%%%%%%%%%%%%%%%%%%%%%%%%%%%%%%%%%%%%%%%%%%%%%%%%%%%%%%%%%%%%%%%%%%%%%
%%%%%%%%%%%%%%%%%%%%%%%%%%%%%%%%%%%%%%%%%%%%%%%%%%%%%%%%%%%%%%%%%%%%%%%%%%%%%%%
% DELETE THIS PART. DO NOT PLACE CONTENT AFTER THE REFERENCES!
%%%%%%%%%%%%%%%%%%%%%%%%%%%%%%%%%%%%%%%%%%%%%%%%%%%%%%%%%%%%%%%%%%%%%%%%%%%%%%%
%%%%%%%%%%%%%%%%%%%%%%%%%%%%%%%%%%%%%%%%%%%%%%%%%%%%%%%%%%%%%%%%%%%%%%%%%%%%%%%
%\newpage

\clearpage

\appendix
\section{Serial SVRG}\label{appendix:svrg}
The learning procedure of the non-distributed~(serial) SVRG is shown in Algorithm~\ref{ProcedureSVRG}, where $\eta$ is the learning rate, $\w_t$ denotes the parameter value at iteration $t$, $M$ is a hyper-parameter.

\begin{algorithm}[htb]
  \caption{SVRG}
  \label{ProcedureSVRG}
   \begin{algorithmic}[1]
    \STATE Initialize $\eta$, $\textbf{w}_0$, $M$;
    \FOR{$t = 0,1,2,\cdots$}
    \STATE $\textbf{z} = \frac{1}{N}\sum_{i=1}^N \nabla f_i(\textbf{w}_t)$;
    \STATE $\tilde{\textbf{w}}_0 = \textbf{w}_t$;
   \FOR{$m = 0,1,2,\cdots, M-1$}
    \STATE Randomly pick $i_m \in \{1,\cdots,N\}$
     \STATE  $\tilde{\textbf{w}}_{m+1} = \tilde{\textbf{w}}_{m}- \eta(\nabla f_{i_m}(\tilde{\textbf{w}}_{m})- \nabla f_{i_m}(\tilde{\textbf{w}}_{0}) + \textbf{z})$;
      \ENDFOR
     \STATE \textbf{option I:} Set $\textbf{w}_{t+1}= \tilde{\textbf{w}}_{M}$;
      \STATE \textbf{option II:} Set $\textbf{w}_{t+1} = \tilde{\textbf{w}}_{m} $ for randomly chosen $m \in \{1,2,\cdots, M\}$;
    \ENDFOR
  \end{algorithmic}
\end{algorithm}

\section{Distributed SVRG with Parameter Server}\label{appendix:PSsvrg}

Both SynSVRG and AsySVRG can be implemented on the Parameter Server framework to get distributed versions of SVRG. The SynSVRG is shown in Algorithm~\ref{SynSVRG:server} and Algorithm~\ref{SynSVRG:worker}. Algorithm~\ref{SynSVRG:server} is the operations of Servers and Algorithm~\ref{SynSVRG:worker} is the operations of Workers.
\begin{algorithm}[htb]
 \caption{Task of Server$\_k$ in SynSVRG}
  \label{SynSVRG:server}
  \begin{algorithmic}[1]
   \STATE Initialize $\eta$, $\textbf{w}_0^{(k)}$;
  \FOR{$t = 0,1,2,\cdots$}
    \STATE $\tilde{\textbf{w}}_0^{(k)} = \textbf{w}_t^{(k)}$;
   \STATE Send $\textbf{w}_t^{(k)}$ to all Workers;
   \STATE Receive $\textbf{z}_1^{(k)},\textbf{z}_2^{(k)},\cdots,\textbf{z}_q^{(k)}$ from the $q$ Workers;
    \STATE Compute the full gradient $\textbf{z}^{(k)} = \frac{1}{N}\sum_{l=1}^q \textbf{z}_l^{(k)}$;
    \FOR{$m = 0,1,2,\cdots,M-1$}
      \STATE Send $\tilde{\textbf{w}}_{m}^{(k)}$ to all Workers;
        \STATE Receive $\nabla_{m_1}^{(k)},\nabla_{m_2}^{(k)},\cdots,\nabla_{m_q}^{(k)}$ from $q$ Workers;
       \STATE $\nabla_m^{(k)} = \frac{1}{q}\sum_{l=1}^q \nabla_{m_l}^{(k)}$;
        \STATE  $\tilde{\textbf{w}}_{m+1}^{(k)} = \tilde{\textbf{w}}_{m}^{(k)}- \eta(\nabla_m^{(k)} + \textbf{z}^{(k)})$;
    \ENDFOR
    \STATE $\textbf{w}^{(k)}_{t+1} = \tilde{\textbf{w}}_{M}^{(k)}$;
  \ENDFOR
  \end{algorithmic}
\end{algorithm}

\begin{algorithm}[htb]
  \caption{Task of Worker$\_l$ in SynSVRG}
   \label{SynSVRG:worker}
   \begin{algorithmic}[1]
    \FOR{$t = 0,1,2,\cdots$}
     \STATE Receive $\textbf{w}^{(1)}_t, \textbf{w}^{(2)}_t, \cdots,\textbf{w}^{(p)}_t$ from $p$ Servers and combine the complete $\textbf{w}_t = (\textbf{w}^{(1)}_t,\textbf{w}^{(2)}_t,\cdots,\textbf{w}^{(p)}_t)$;
    \STATE Compute the \emph{local gradient sum}  $\textbf{z}_{l} = \sum_{i\in D_l} \nabla f_i(\textbf{w}_t)$;
    \STATE Cut $\textbf{z}_{l}$ into $p$ parts $\{\textbf{z}_{l}^{(1)},\textbf{z}_{l}^{(2)},\cdots,\textbf{z}_{l}^{(p)}\}$ and send the $p$ parts to $p$ Servers, where $\textbf{z}_{l}^{(k)}$ is sent to Server\_$k$;
   \FOR{$m = 0,1,2,\cdots, M-1$}
    \STATE Receive $\tilde{\textbf{w}}_{m}^{(1)}, \tilde{\textbf{w}}_{m}^{(2)}, \cdots,\tilde{\textbf{w}}_{m}^{(p)}$ from $p$ Servers and combine complete $\tilde{\textbf{w}}_{m} = (\tilde{\textbf{w}}_{m}^{(1)}, \tilde{\textbf{w}}_{m}^{(2)}, \cdots,\tilde{\textbf{w}}_{m}^{(p)})$.
    \STATE Pick up an instance $\textbf{x}_{i_m}$ from the local data $\D_{(l)}$ with index $i_m$;
    \STATE $\nabla_{m_l} = \nabla f_{i_m}(\tilde{\textbf{w}}_{m}) - \nabla f_{i_m}(\textbf{w}_t)$;
        \STATE Cut $\nabla_{m_l}$ into $p$ parts $\{\nabla_{m_l}^{(1)},\nabla_{m_l}^{(2)},\cdots,\nabla_{m_l}^{(p)}\}$ and send the $p$ parts to $p$ Servers, where $\nabla_{m_l}^{(k)}$ is sent to Server\_$k$;
      \ENDFOR
    \ENDFOR
  \end{algorithmic}
\end{algorithm}

The AsySVRG is shown in Algorithm~\ref{AsySVRG:server} and Algorithm~\ref{AsySVRG:worker}. Algorithm~\ref{AsySVRG:server} is the operations of Servers and Algorithm~\ref{AsySVRG:worker} is the operations of Workers.

\begin{algorithm}[htb]
 \caption{Task of Server$\_k$ in AsySVRG}
  \label{AsySVRG:server}
  \begin{algorithmic}[1]
   \STATE Initialize $\eta$, $\textbf{w}_0^{(k)}$;
  \FOR{$t = 0,1,2,\cdots$}
    \STATE $\tilde{\textbf{w}}_0^{(k)} = \textbf{w}_t^{(k)}$;
   \STATE Send $\textbf{w}_t^{(k)}$ to all Workers;
   \STATE Receive $\textbf{z}_1^{(k)},\textbf{z}_2^{(k)},\cdots,\textbf{z}_q^{(k)}$ from the $q$ Workers;
    \STATE Compute the full gradient $\textbf{z}^{(k)} = \frac{1}{N}\sum_{l=1}^q \textbf{z}_l^{(k)}$;
    \STATE $m = 0$;
    \REPEAT
     \IF{Receive \emph{pull} request from Worker$\_l$}
        \STATE Send $\tilde{\textbf{w}}_{m}^{(k)}$ to Worker$\_l$;
     \ELSIF{Receive \emph{push} request from Worker$\_l$}
     \STATE  Receive $\nabla_m^{(k)}$ from  Worker$\_l$;
      \STATE  $\tilde{\textbf{w}}_{m+1}^{(k)} = \tilde{\textbf{w}}_{m}^{(k)}- \eta(\nabla_m^{(k)} + \textbf{z}^{(k)})$;
      \STATE $m$ increases by $1$;
     \ENDIF
    \UNTIL{$m > M-1 $}
    \STATE Send \emph{End Signal} to all Workers;
    \STATE $\textbf{w}^{(k)}_{t+1} = \tilde{\textbf{w}}_{M}^{(k)}$;
  \ENDFOR
  \end{algorithmic}
\end{algorithm}

\begin{algorithm}[htb]
  \caption{Task of Worker$\_l$ in AsySVRG}
   \label{AsySVRG:worker}
   \begin{algorithmic}[1]
    \FOR{$t = 0,1,2,\cdots$}
     \STATE Receive $\textbf{w}^{(1)}_t, \textbf{w}^{(2)}_t, \cdots,\textbf{w}^{(p)}_t$ from $p$ Servers and combine the complete $\textbf{w}_t = (\textbf{w}^{(1)}_t,\textbf{w}^{(2)}_t,\cdots,\textbf{w}^{(p)}_t)$;
    \STATE Compute the \emph{local gradient sum}  $\textbf{z}_{l} = \sum_{i\in D_l} \nabla f_i(\textbf{w}_t)$;
    \STATE Cut $\textbf{z}_{l}$ into $p$ parts $\{\textbf{z}_{l}^{(1)},\textbf{z}_{l}^{(2)},\cdots,\textbf{z}_{l}^{(p)}\}$ and send the $p$ parts to $p$ Servers, where $\textbf{z}_{l}^{(k)}$ is sent to Server\_$k$;
  \REPEAT
  \STATE Send \emph{pull} request to all Servers;
    \STATE Receive $\tilde{\textbf{w}}_{m}^{(1)}, \tilde{\textbf{w}}_{m}^{(2)}, \cdots,\tilde{\textbf{w}}_{m}^{(p)}$ from $p$ Servers and combine complete $\tilde{\textbf{w}}_{m} = (\tilde{\textbf{w}}_{m}^{(1)}, \tilde{\textbf{w}}_{m}^{(2)}, \cdots,\tilde{\textbf{w}}_{m}^{(p)})$.
    \STATE Pick up an instance $\textbf{x}_{i_m}$ from the local data $\D_{(l)}$ with index $i_m$;
    \STATE $\nabla_{m} = \nabla f_{i_m}(\tilde{\textbf{w}}_{m}) - \nabla f_{i_m}(\textbf{w}_t)$;
     \STATE Send \emph{push} request to all Servers;
     \STATE Cut $\nabla_{m}$ into $p$ parts $\{\nabla_{m}^{(1)},\nabla_{m}^{(2)},\cdots,\nabla_{m}^{(p)}\}$ and send the $p$ parts to $p$ Servers, where $\nabla_{m}^{(k)}$ is sent to Server\_$k$;
    \UNTIL{receive \emph{End Signal} from Servers};
    \ENDFOR
  \end{algorithmic}
\end{algorithm}

%\section{Evaluate Influence of $\lambda$}\label{appendix:lambda}
%\appendix
%\section{Do \emph{not} have an appendix here}
%
%\textbf{\emph{Do not put content after the references.}}
%%
%Put anything that you might normally include after the references in a separate
%supplementary file.
%
%We recommend that you build supplementary material in a separate document.
%If you must create one PDF and cut it up, please be careful to use a tool that
%doesn't alter the margins, and that doesn't aggressively rewrite the PDF file.
%pdftk usually works fine.
%
%\textbf{Please do not use Apple's preview to cut off supplementary material.} In
%previous years it has altered margins, and created headaches at the camera-ready
%stage.
%%%%%%%%%%%%%%%%%%%%%%%%%%%%%%%%%%%%%%%%%%%%%%%%%%%%%%%%%%%%%%%%%%%%%%%%%%%%%%%
%%%%%%%%%%%%%%%%%%%%%%%%%%%%%%%%%%%%%%%%%%%%%%%%%%%%%%%%%%%%%%%%%%%%%%%%%%%%%%%

\end{document}